\newif\ifdraft \drafttrue
\newif\iffull \fulltrue
\newtheorem{claim}{Claim}
\newtheorem{lemma}{Lemma}
\newtheorem{theorem}{Theorem}
\newtheorem{corollary}{Corollary}
\theoremstyle{definition}
\newtheorem{definition}{Definition}
\theoremstyle{remark}
\newtheorem{remark}{Remark}
\newcommand{\inner}[2]{\langle #1, #2 \rangle}
\newcommand{\xit}{\ensuremath{x_{it}}}
\newcommand{\yit}{\ensuremath{y_{i,t}}}
\newcommand{\D}{\mathcal{D}}
\newcommand{\cH}{\mathcal{H}}
\newcommand{\E}[1]{{\mathbb{E}\left[#1 \right]}}
\newcommand{\Ex}[2]{\mathbb{E}_{#1}\left[ #2 \right]}
\renewcommand{\Pr}[1]{\mathbb{P}\left[#1\right]}
\newcommand{\Prob}[2]{\mathbb{P}_{#1}\left[#2\right]}
\newcommand{\A}{\mathcal{A}}
\newcommand{\bE}{\mathop{\mathbb{E}}}
\newcommand{\bias}{\mathrm{Bias}}
\newcommand{\cX}{\mathcal{X}}
\newcommand{\eps}{\epsilon}
\title{Mitigating Bias in Adaptive Data Gathering via Differential Privacy}
\author{Seth Neel\thanks{Department of Statistics, The Wharton School, University of Pennsylvania. \texttt{sethneel@wharton.upenn.edu.} Supported in part by a 2017 NSF Graduate Research Fellowship.}
 \and Aaron Roth\thanks{Department of Computer and Information Sciences, University of Pennsylvania. \texttt{aaroth@cis.upenn.edu.} Supported in part by grants from the DARPA Brandeis project, the Sloan Foundation, and NSF grants CNS-1513694 and CNS-1253345.}}
\begin{document}
\maketitle

\begin{abstract}
Data that is gathered adaptively --- via bandit algorithms, for example --- exhibits bias. This is true both when gathering simple numeric valued data --- the empirical means kept track of by stochastic bandit algorithms are biased downwards --- and when gathering more complicated data --- running hypothesis tests on complex data gathered via contextual bandit algorithms leads to false discovery. In this paper, we show that this problem is mitigated if the data collection procedure is differentially private. This lets us both bound the bias of simple numeric valued quantities (like the empirical means of stochastic bandit algorithms), and correct the $p$-values of hypothesis tests run on the adaptively gathered data. Moreover, there exist differentially private bandit algorithms with near optimal regret bounds: we apply existing theorems in the simple stochastic case, and give a new analysis for linear contextual bandits. We complement our theoretical results with experiments validating our theory.
\end{abstract}

\thispagestyle{empty} \setcounter{page}{0}
\clearpage

\section{Introduction}
\label{intro}
Many modern data sets consist of data that is gathered \emph{adaptively}: the choice of whether to collect more data points of a given type depends on the data already collected. For example, it is common in industry to conduct ``A/B'' tests to make decisions about many things, including ad targeting, user interface design, and algorithmic modifications, and this A/B testing is often conducted using ``bandit learning algorithms'' \cite{bandits}, which adaptively select treatments to show to users in an effort to find the best treatment as quickly as possible. Similarly, sequential clinical trials may halt or re-allocate certain treatment groups due to preliminary results, and empirical scientists may initially try and test multiple hypotheses and multiple treatments, but then decide to gather more data in support of certain hypotheses and not others, based on the results of preliminary statistical tests.

Unfortunately, as demonstrated by \cite{gather}, the data that results from adaptive data gathering procedures will often exhibit substantial \emph{bias}. As a result, subsequent analyses that are conducted on the data gathered by adaptive procedures will be prone to error, unless the bias is explicitly taken into account. This can be difficult. \cite{gather} give a selective inference approach: in simple stochastic bandit settings, if the data was gathered by a specific stochastic algorithm that they design, they give an MCMC based procedure to perform maximum likelihood estimation to recover de-biased estimates of the underlying distribution means.
In this paper, we give a related, but orthogonal approach whose simplicity allows for a substantial generalization beyond the simple stochastic bandits setting. We show that in very general settings, if the data is gathered by a differentially private procedure, then we can place strong bounds on the bias of the data gathered, without needing any additional de-biasing procedure. Via elementary techniques, this connection implies the existence of simple stochastic bandit algorithms with nearly optimal worst-case regret bounds, with very strong bias guarantees. The connection also allows us to derive algorithms for linear contextual bandits with nearly optimal regret guarantees, and strong bias guarantees. Since our connection to differential privacy only requires that the \emph{rewards} and not the \emph{contexts} be kept private, we are able to obtain improved accuracy compared to past approaches to private contextual bandit problems. By leveraging existing connections between differential privacy and adaptive data analysis \cite{preserve,agstab,maxinformation2}, we can extend the generality of our approach to bound not just bias, but to correct for effects of adaptivity on arbitrary statistics of the gathered data. For example, we can obtain valid $p$-value corrections for hypothesis tests (like $t$-tests)  run on the adaptively collected data. Since the data being gathered will generally be useful for some as yet unspecified scientific analysis, rather than just for the narrow problem of mean estimation, our technique allows for substantially broader possibilities compared to past approaches. Experiments explore the bias incurred by conventional bandit algorithms, confirm the reduction in bias obtained by leveraging privacy, and show why correction for adaptivity is crucial to performing valid post-hoc hypothesis tests. In particular we show that for the fundamental primitive of conducting $t$-tests for regression coefficients, naively conducting tests on adaptively gathered data leads to incorrect inference.

\subsection{Our Results}
This paper has four main contributions:
\begin{enumerate}
\item Using elementary techniques, we provide explicit bounds on the bias of empirical arm means maintained by bandit algorithms in the simple stochastic setting that make their selection decisions as a differentially private function of their observations. Together with existing differentially private algorithms for stochastic bandit problems, this yields an algorithm that obtains an essentially optimal worst-case regret bound, and guarantees minimal bias (on the order of $O(1/\sqrt{K\cdot T})$) for the empirical mean maintained for every arm.
\item We then extend our results to the linear contextual bandit problem. We show that algorithms that make their decisions in a way that is differentially private in the observed reward of each arm (but which need not be differentially private in the context) have bounded bias (as measured by the difference between the predicted reward of each arm at each time step, compared to its true reward). We also derive a differentially private algorithm for the contextual bandit problem, and prove new bounds for it. Together with our bound on bias, this algorithm also obtains strong sublinear regret bounds, while having robust guarantees on bias.
\item We then make a general observation, relating adaptive data gathering to an adaptive analysis of a fixed dataset (in which the choice of which query to pose to the dataset is adaptive). This lets us apply the large existing literature connecting differential privacy to adaptive data analysis \cite{DFHPRR15science,preserve,agstab}. In particular, it lets us apply the max-information bounds of \cite{maxinformation1,maxinformation2} to our adaptive data gathering setting. This allows us to give much more general guarantees about the data collected by differentially private collection procedures, that extend well beyond bias. For example, it lets us correct the $p$-values for arbitrary hypothesis tests run on the gathered data.
\item Finally, we run a set of experiments that measure the bias incurred by the standard UCB algorithm in the stochastic bandit setting, contrast it with the low bias obtained by a private UCB algorithm, and show that there are settings of the privacy parameter that simultaneously can make bias statistically insignificant, while having competitive empirical regret with the non-private UCB algorithm. We also demonstrate in the linear contextual bandit setting how failing to correct for adaptivity can lead to false discovery when applying $t$-tests for non-zero regression coefficients on an adaptively gathered dataset.
\end{enumerate}

\subsection{Related Work}
This paper bridges two recent lines of work. Our starting point is two recent papers: \cite{Vil15} empirically demonstrate in the context of clinical trials that a variety of simple stochastic bandit algorithms produce biased sample mean estimates (Similar results have been empirically observed in the context of contextual bandits \cite{DAI17}). \cite{gather} prove that simple stochastic bandit algorithms that exhibit two natural properties (satisfied by most commonly used algorithms, including UCB and Thompson Sampling) result in empirical means that exhibit negative bias. They then propose a heuristic algorithm which computes a maximum likelihood estimator for the sample means from the empirical means gathered by a modified UCB algorithm which adds Gumbel noise to the decision statistics.  \cite{DMST17} propose a debiasing procedure for ordinary least-squares estimates computed from adaptively gathered data that trades off bias for variance, and prove a central limit theorem for their method. In contrast, the methods we propose in this paper are quite different. Rather than giving an ex-post debiasing procedure, we show that if the data were gathered in a differentially private manner, no debiasing is necessary. The strength of our method is both in its simplicity and generality: rather than proving theorems specific to particular estimators, we give methods to correct the $p$-values for \emph{arbitrary} hypothesis tests that might be run on the adaptively gathered data.

The second line of work is the recent literature on \emph{adaptive data analysis} \cite{preserve,maxinformation1,HU14,SU15,RZ16,WLF16,agstab,ladder,CLNRW16,FS17,FS17b} which draws a connection between differential privacy \cite{DMNS06} and generalization guarantees for adaptively chosen statistics. The adaptivity in this setting is dual to the setting we study in the present paper: In the adaptive data analysis literature, the dataset itself is fixed, and the goal is to find techniques that can mitigate bias due to the adaptive selection of analyses. In contrast, here, we study a setting in which the data gathering procedure is itself adaptive, and can lead to bias even for a fixed set of statistics of interest.  However, we show that adaptive data gathering can be re-cast as an adaptive data analysis procedure, and so the results from the adaptive data analysis literature can be ported over.

\section{Preliminaries}
\label{prelims}
\subsection{Simple Stochastic Bandit Problems}
In a simple stochastic bandit problem, there are $K$ unknown distributions $P_i$ over the unit interval [0,1], each with (unknown) mean $\mu_i$. Over a series of rounds $t \in \{1,\ldots,T\}$, an algorithm $\A$ chooses an arm $i_t \in [K]$, and observes a reward $y_{i_t,t} \sim P_{i_t}$. Given a sequence of choices $i_1,\ldots,i_T$, the pseudo-regret of an algorithm is defined to be:
$$\mathrm{Regret}((P_1,\ldots,P_K),i_1,\ldots,i_T) = T\cdot \max_i \mu_i - \sum_{t=1}^T \mu_{i_t}$$
We say that regret is bounded if we can put a bound on the quantity $\mathrm{Regret}((P_1,\ldots,P_K),i_1,\ldots,i_T)$ in the worst case over the choice of distributions $P_1,\ldots,P_K$, and with high probability or in expectation over the randomness of the algorithm and of the reward sampling.

As an algorithm $\A$ interacts with a bandit problem, it generates a \emph{history} $\Lambda$ , which records the sequence of actions taken and rewards observed thus far: $\Lambda_t = \{(i_\ell,y_{i_\ell,\ell})\}_{\ell=1}^{t-1}$.
We denote the space of histories of length $T$ by $\cH^T = ([K]\times \mathbb{R})^T$.

The definition of an algorithm $\A$ induces a sequence of $T$ (possibly randomized) selection functions $f_t:\cH^{t-1}\rightarrow [K]$, which map histories onto decisions of which arm to pull at each round.
\subsection{Contextual Bandit Problems}
In the contextual bandit problem, decisions are endowed with observable features. Our algorithmic results in this paper focus on the \emph{linear} contextual bandit problem, but our general connection between adaptive data gathering and differential privacy extends beyond the linear case. For simplicity of exposition, we specialize to the linear case here.

There are $K$ arms $i$, each of which is associated with an unknown $d$-dimensional linear function represented by a vector of coefficients $\theta_i \in \mathbb{R}^d$ with $||\theta_i||_2 \leq 1$. In rounds $t \in \{1,\ldots,T\}$, the algorithm is presented with a \emph{context} $x_{i,t} \in \mathbb{R}^d$ for each arm $i$ with $||x_{i,t}||_2 \leq 1$, which may be selected by an adaptive adversary as a function of the past history of play. We write $x_t$ to denote the set of all $K$ contexts present at round $t$. As a function of these contexts, the algorithm then selects an arm $i_t$, and observes a reward $y_{i_t,t}$. The rewards satisfy $\E{y_{i,t}} = \theta_{i}\cdot x_{i,t}$ and are bounded to lie in $[0,1]$. Regret is now measured with respect to the optimal policy. Given a sequence of contexts $x_1,\ldots,x_t$, a set of linear functions $\theta_1,\ldots,\theta_k$, and a set of choices $i_1,\ldots,i_k$,  the pseudo-regret of an algorithm is defined to be:
$$\mathrm{Regret}((\theta_1,\ldots,\theta_K),(x_1,i_1),\ldots,(x_t,i_T)) = \sum_{t=1}^T \left(\max_i \theta_i \cdot x_{i,t} - \theta_{i,t}\cdot x_{i_t,t}\right)$$
We say that regret is bounded if we can put a bound on the quantity $\mathrm{Regret}((\theta_1,\ldots,\theta_K),(x_1,i_1),\ldots,(x_T,i_T))$ in the worst case over the choice of linear functions $\theta_1,\ldots,\theta_K$ and contexts $x_1,\ldots,x_T$, and with high probability or in expectation over the randomness of the algorithm and of the rewards.

In the contextual setting, histories incorporate observed context information as well: $\Lambda_t = \{(i_\ell,x_\ell,y_{i_\ell,\ell})\}_{\ell=1}^{t-1}$.

Again, the definition of an algorithm $\A$ induces a sequence of $T$ (possibly randomized) selection functions $f_t:\cH^{t-1} \times \mathbb{R}^{d\times K}\rightarrow [K]$, which now maps both a history and a set of contexts at round $t$ to a choice of arm at round $t$.

\subsection{Data Gathering in the Query Model}\label{ada}
Above we've characterized a bandit algorithm $\A$ as \emph{gathering} data adaptively using a sequence of selection functions $f_t$, which map the observed history $\Lambda_t \in \cH^{t-1}$ to the index of the next arm pulled. In this model only after the arm is chosen is a reward drawn from the appropriate distribution. Then the history is updated, and the process repeats.

In this section, we observe that whether the reward is drawn after the arm is ``pulled," or in advance, is a distinction without a difference. We cast this same interaction into the setting where an analyst asks an adaptively chosen sequence of queries to a fixed dataset, representing the arm rewards. The process of running a bandit algorithm $\A$ up to time $T$ can be formalized as the adaptive selection of  $T$ queries against a single database of size $T$ - fixed in advance. The formalization consists of two steps:
\begin{itemize}
\item By the principle of deferred randomness, we view any simple stochastic bandit algorithm as operating in a setting in which $T$ i.i.d. samples from $\prod_{i=1}^K P_i$ (vectors of length $K$ representing the rewards for each of $K$ arms on each time step $t$) are drawn before the interaction begins. This is the \textbf{Interact} algorithm below.

In the contextual setting, the contexts are also available, and the $T$ draws are not drawn from identical distributions. Instead, the $t^{th}$ draw is from $\prod_{i=1}^K P_i^t$, where each distribution $P_i^t$ is determined by the context $x_i^t$.

\item The choice of arm pulled at time $t$ by the bandit algorithm can be viewed as the answer to an adaptively selected query against this fixed dataset.  This is the \textbf{InteractQuery} algorithm below.
\end{itemize}

Adaptive data analysis is formalized as an interaction in which a data analyst $\A$ performs computations on a dataset $D$, observes the results, and then may choose the identity of the next computation to run as a function of previously computed results \cite{preserve,DFHPRR15science}. A sequence of recent results shows that if the queries are differentially private in the dataset $D$, then they will not in general overfit $D$, in the sense that the distribution over results induced by computing $q(D)$ will be ``similar'' to the distribution over results induced if $q$ were run on a new dataset, freshly sampled from the same underlying distribution \cite{preserve,DFHPRR15science,agstab,maxinformation1,maxinformation2}. We will be more precise about what these results say in Section \ref{mutual}.

 Recall that histories $\Lambda$ record the choices of the algorithm, in addition to its observations. It will be helpful to introduce notation that separates out the choices of the algorithm from its observations. In the simple stochastic setting and the contextual setting, given a history $\Lambda_t$, an \emph{action history} $\Lambda_t^{\A} = (i_1,\ldots,i_{t-1}) \in [K]^{t-1}$ denotes the portion of the history recording the actions of the algorithm.

In the simple stochastic setting, a \emph{bandit tableau} is a $T\times K$ matrix $D \in \left([0,1]^K\right)^T$. Each row $D_t$ of $D$ is a vector of $K$ real numbers, intuitively representing the rewards that would be available to a bandit algorithm at round $t$ for each of the $K$ arms. In the contextual setting, a bandit tableau is represented by a pair of $T\times K$ matrices: $D \in \left([0,1]^K\right)^T$ and $C \in \left((\mathbb{R}^d)^K \right)^T$. Intuitively, $C$ represents the \emph{contexts} presented to a bandit algorithm $\A$ at each round: each row $C_t$ corresponds to a set of $K$ contexts, one for each arm. $D$ again represents the rewards that would be available to the bandit algorithm at round $t$ for each of the $K$ arms.

We write $\mathrm{Tab}$ to denote a bandit tableau when the setting has not been specified: implicitly, in the simple stochastic case, $\mathrm{Tab} = D$, and in the contextual case, $\mathrm{Tab} = (D, C)$.

Given a bandit tableau and a bandit algorithm $\A$, we have the following interaction:
\begin{algorithm}
\textbf{Interact}\newline
\textbf{Input}: Time horizon $T$, bandit algorithm $\A$, and bandit tableau $\mathrm{Tab}$ ($D$ in the simple stochastic case, $(D, C)$ in the contextual case).\newline
\textbf{Output}: Action history $\Lambda_{T+1}^\A \in [K]^T$
\begin{algorithmic}
\FOR{$t = 1$ to $T$}
  \STATE (In the contextual case) show $\A$ contexts $C_{t,1},\ldots,C_{t,K}$.
  \STATE Let $\A$ play action $i_t$
  \STATE Show $\A$ reward $D_{t,i_{t}}$.
\ENDFOR
\STATE Output $(i_1,\ldots,i_T)$.
\end{algorithmic}
\label{alg:interact}
\end{algorithm}

We denote the subset of the reward tableau $D$ corresponding to rewards that would have been revealed to a bandit algorithm $\A$ given action history $\Lambda_t^{\A}$, by $\Lambda_t^{\A}(D)$. Concretely if $\Lambda_t^{\A} = (i_1,\ldots,i_{t-1})$ then  $\Lambda_t^{\A}(D)= \{(i_\ell,y_{i_\ell,\ell})\}_{\ell =1}^{t-1}$. Given a selection function $f_t$ and an action history $\Lambda_t^{\A}$, define the query $q_{\Lambda_t^{\A}}$ as  $q_{\Lambda_t^{\A}}(D) = f_t({\Lambda_t^{\A}}(D))$.

We now define Algorithms \textbf{Bandit} and \textbf{InteractQuery}. \textbf{Bandit} is a standard contextual bandit algorithm defined by selection functions $f_t$,  and \textbf{InteractQuery} is the \textbf{Interact} routine that draws the rewards in advance, and at time $t$ selects action $i_t$ as the result of query $q_{\Lambda_t^{\A}}$. With the above definitions in hand, it is straightforward to show that the two Algorithms are equivalent, in that they induce the same joint distribution on their outputs. In both algorithms for convenience we assume we are in the linear contextual setting, and we write $\eta_{i_t}$ to denote the i.i.d. error distributions of the rewards, conditional on the contexts.

\begin{algorithm}
\textbf{Bandit}\newline
\textbf{Inputs: $T, k, \{\xit\}, \{\theta_i\}, f_t, \Lambda_0 = \emptyset$}
\begin{algorithmic}[1]
\FOR{$t = 1,\ldots ,T$:}
  \STATE Let $i_{t} = f_t(\Lambda_{t-1})$
  \STATE Draw $y_{i_t,t} \sim x_{i_t,t} \cdot \theta_{i_t} + \eta_{i_t}$
  \STATE Update $\Lambda_t = \Lambda_{t-1} \cup (i_t, y_{i_t,t})$
\ENDFOR
 \STATE \textbf{Return: $\Lambda_{T}$}
\end{algorithmic}
\end{algorithm}

 \begin{algorithm}
 \textbf{InteractQuery}\newline
  \textbf{Inputs: $T, k, D:D_{it}=\theta_i \cdot \xit+\eta_{it}, f_t$}
    \begin{algorithmic}[1]
     \FOR{ $t = 1,\ldots ,T$:}
     	\STATE Let $q_{t}= q_{\Lambda_{t-1}^{\A}}$
	\STATE Let $i_t= q_t(D)$
	\STATE Update $\Lambda_t^{\A} = \Lambda_{t-1}^{\A} \cup i_t$
   \ENDFOR
  \STATE \textbf{Return: $\Lambda_T^{\A}$}
\end{algorithmic}
 \end{algorithm}

\begin{claim}
\label{duh}
Let $P_{1,t}$ be the joint distribution induced by Algorithm \textbf{Bandit} on $\Lambda_t$ at time $t$, and let $P_{2,t}$ be the joint distribution induced by Algorithm \text{\textbf{InteractQuery}} on $\Lambda_t = \Lambda_t^{\A}(D)$. Then $\forall t \; P_{1,t} = P_{2,t}$.
\end{claim}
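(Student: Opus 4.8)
The plan is to prove the claim by induction on $t$, coupling the two processes so that they share the same action history round by round, and then arguing that the single new reward appended to the history has the same conditional law in both algorithms. The base case is immediate: at $t=1$ both histories are empty ($\Lambda_0 = \emptyset = \Lambda_0^{\A}(D)$), so $P_{1,1} = P_{2,1}$ trivially. For the inductive step I would assume $P_{1,t} = P_{2,t}$ under the identification $\Lambda_t = \Lambda_t^{\A}(D)$, and show that the two algorithms agree on the distribution of the pair $(i_t, y_{i_t,t})$ that is appended to form $\Lambda_{t+1}$.

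The action step is immediate from the definitions already set up in the excerpt. In \textbf{InteractQuery} we have $i_t = q_t(D) = q_{\Lambda_{t-1}^{\A}}(D) = f_t\!\left(\Lambda_{t-1}^{\A}(D)\right) = f_t(\Lambda_{t-1})$, which is exactly the arm that \textbf{Bandit} selects. If the selection functions $f_t$ are randomized, I would couple the two runs to share the same internal coins, so that conditioned on a common history $\Lambda_{t-1}$ the chosen index $i_t$ is identical in law.

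The reward step is where the real content lies, and it is the step I expect to be the main obstacle. In \textbf{Bandit}, $y_{i_t,t}$ is a fresh draw from $\theta_{i_t}\cdot x_{i_t,t} + \eta_{i_t}$, i.e.\ from $P_{i_t}^t$, independent of $\Lambda_{t-1}$. In \textbf{InteractQuery}, $y_{i_t,t} = D_{t,i_t}$ was sampled in advance, and $i_t$ is itself a random index correlated with the past. The key fact to establish is that $D_{t,i_t}$, conditioned on the history and on the event $\{i_t = i\}$, is nonetheless distributed exactly as $P_i^t$. This holds because the action history $\Lambda_{t-1}^{\A}$ — and hence the index $i_t$ — is a (possibly randomized) function of the rows $D_1,\ldots,D_{t-1}$ only, which are independent of the row $D_t$ by the i.i.d.-across-time construction of the tableau. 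The subtlety to argue carefully is the measure-theoretic point that we are selecting a coordinate of the independent row $D_t$ using an index $i_t$ determined by past (independent) randomness: conditioning on $\{i_t = i\}$ does not perturb the law of the coordinate $D_{t,i}$ precisely because $D_t \perp (D_1,\ldots,D_{t-1})$, so the selected entry inherits the marginal $P_{i_t}^t$ conditional on $i_t$, matching the draw in \textbf{Bandit}.

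Combining the two steps, conditional on a common $\Lambda_{t-1}$ both algorithms append $(i_t, y_{i_t,t})$ with the same conditional distribution, so $P_{1,t+1} = P_{2,t+1}$, completing the induction. I would close with the remark that the rewards $D_{t,j}$ of the unpulled arms $j \neq i_t$ are never inspected by any selection function, since each $f_s$ reads only $\Lambda_s^{\A}(D)$, which records the rewards of pulled arms; hence pre-sampling the entire tableau rather than only the pulled entries is indeed ``a distinction without a difference.''
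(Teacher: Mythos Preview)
Your proposal is correct and rests on the same idea as the paper's proof---deferred randomness, i.e.\ that the row $D_t$ is independent of the rows $D_1,\ldots,D_{t-1}$ that determine $i_t$. The organizational decomposition differs slightly: the paper first invokes deferred randomness once to equip \textbf{Bandit} with a pre-drawn tableau, then conditions on $D$ and argues that the \emph{action} histories $(i_1,\ldots,i_t)$ have the same conditional law via the chain-rule factorization $\Pr{i_1,\ldots,i_t}=\prod_k \Pr{i_k\mid i_{k-1},\ldots,i_1}$ (the rewards being deterministic once $D$ and the actions are fixed). You instead induct on $t$ and at each step separately match the action law and the reward law, invoking the independence of $D_t$ from the past inside the inductive step. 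Both arguments are complete; the paper's is a bit terser because conditioning on $D$ removes the need to argue about rewards at all, while yours makes the measure-theoretic point about selecting a coordinate of an independent row more explicit. One small imprecision: in the contextual setting the rows of $D$ are independent but not identically distributed (the means depend on the contexts $x_{i,t}$), so your phrase ``i.i.d.-across-time construction'' should simply read ``independence across rows''; your argument only uses independence, so this does not affect correctness.
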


The upshot of this equivalence is that we can import existing results that hold in the setting in which the dataset is fixed, and queries are adaptively chosen. There are a large collection of results of this form that apply when the queries are differentially private \cite{preserve,agstab, maxinformation2} which apply directly to our setting. In the next section we formally define differential privacy in the simple stochastic and contextual bandit setting, and leave the description of the more general transfer theorems to Section~\ref{mutual}.

\subsection{Differential Privacy}
We will be interested in algorithms that are differentially private. In the simple stochastic bandit setting, we will require differential privacy with respect to the rewards. In the contextual bandit setting, we will also require differential privacy with respect to the rewards, but \emph{not necessarily} with respect to the contexts.

We now define the neighboring relation we need to define bandit differential privacy:
\begin{definition}
In the simple stochastic setting, two bandit tableau's $D, D'$ are \emph{reward neighbors} if they differ in at most a single row: i.e. if there exists an index $\ell$ such that for all $t \neq \ell$, $D_{t} = D'_{t}$.
\end{definition}
In the contextual setting, two bandit tableau's $(D, C), (D', C')$ are \emph{reward neighbors} if $C = C'$ and $D$ and $D'$ differ in at most a single row: i.e. if there exists an index $\ell$ such that for all $t \neq \ell$, $D_{t} = D'_{t}$.

Note that changing a \emph{context} does not result in a neighboring tableau: this neighboring relation will correspond to privacy for the rewards, but not for the contexts.

\begin{remark}
Note that we could have equivalently defined reward neighbors to be tableaus that differ in only a single entry, rather than in an entire row. The distinction is unimportant in a bandit setting, because a bandit algorithm will be able to observe only a single entry in any particular row.
\end{remark}

\begin{definition}
A bandit algorithm $\A$ is $(\epsilon,\delta)$ reward differentially private if for every time horizon $T$ and every pair of bandit tableau $\mathrm{Tab}, \mathrm{Tab}'$ that are reward neighbors, and every subset $S \subseteq [K]^T$:
$$\Pr{\mathbf{Interact}(T, \A, \mathrm{Tab}) \in S} \leq e^\epsilon \Pr{\mathbf{Interact}(T, \A, \mathrm{Tab}') \in S} + \delta$$
If $\delta = 0$, we say that $\A$ is $\epsilon$-differentially private.
\end{definition}

\subsection{The Binary Mechanism}
For many interesting stochastic bandit algorithms $\A$ (UCB, Thompson-sampling, $\epsilon$-greedy) the selection functions $(f_t)_{t \in [T]}$ are randomized functions of the history of sample means at each time step for each arm.  It will therefore be useful to have notation to refer to these means. We write $N_i^T$ to represent the number of times arm $i$ is pulled through round $T$: $N_i^T = \sum_{t' = 1}^{T}\mathbbm{1}_{\{f_{t'}(\Lambda_{t'}) = i\}}$. Note that before the history has been fixed, this is a random variable. In the simple stochastic setting, We write $\hat{Y}_i^{T}$ to denote the sample mean at arm $i$ at time $T: \hat{Y}_i^{T} = \frac{1}{N_i^{T}}\sum_{j=1}^{N_i^{T}}y_{i, t_j}$, where $t_j$ is the time $t$ that arm $i$ is pulled for the $j^{th}$ time. Then we can write the current set of sample means sequences for all $K$ arms at time $T$ as $(\hat{Y}_i^{t})_{i \in [K], t \in [T]}$. Since differential privacy is preserved under post-processing and composition, we observe that to obtain a private version $\A_{priv}$ of any of these standard algorithms, an obvious method would be to estimate $(\hat{Y}_i^{t})_{i \in [K]}$ privately at each round, and then to plug these private estimates into the selection functions $f_t$.

The Binary mechanism \cite{continual, DworkBin} is an online algorithm that continually releases an estimate of a running sum $\sum_{i = 1}^{t}y_i$ as each $y_i$ arrives one at a time, while preserving $\epsilon$-differential privacy of the entire sequence $(y_i)_{i=1}^{T}$, and guaranteeing worst case error that scales only with $\log(T)$. It does this by using a tree-based aggregation scheme that computes partial sums online using the Laplace mechanism, which are then combined to produce estimates for each sample mean $\hat{Y}_i^{t}$. Since the scheme operates via the Laplace mechanism, it extends immediately to the setting when each $y_i$ is a vector with bounded $l_1$ norm. In our private algorithms we actually use a modified version of the binary mechanism due to \cite{continual} called the hybrid mechanism, which operates without a fixed time horizon $T$. For the rest of the paper we denote the noise added to the $t^{th}$ partial sum by the hybrid mechanism, either in vector or scalar form, as $\eta \sim \text{Hybrid}(t,\epsilon)$. \\

\begin{theorem}[Corollary 4.5 in \cite{continual}]
Let $y_1, \ldots y_T \in [0,1]$. The hybrid mechanism
produces sample means $\tilde{Y}^{t} = \frac{1}{t}(\sum_{i=1}^{t}y_i + \eta_t),$
where $\eta_t \sim \text{Hybrid}(t,\epsilon)$, such that the following hold:
\begin{enumerate}
\item The sequence $(\tilde{Y}^{t})_{t \in [T]}$ is $\epsilon$-differentially private in $(y_1, \ldots y_T)$.
\item With probability $1-\delta$ :
\begin{equation}
\label{binary}
\sup_{t \in [T]}|\tilde{Y}_{i}^{T}-\hat{Y}_{i}^{T}| \leq \frac{\log^{*}(\log t)}{\epsilon t}\log t^{1.5}\text{Ln}(\log \log t)\log{\frac{1}{\delta}},
\end{equation}
where $\log^*$ denotes the binary iterated logarithm, and $\text{Ln}$ is the function defined as $\text{Ln}(n) = \prod_{r = 0}^{\log^*(n)}\log^{(r)}n$ in \cite{continual}.
\end{enumerate}
\end{theorem}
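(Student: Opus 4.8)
The statement is Corollary~4.5 of \cite{continual}, so the plan is to recall the binary-tree construction underlying the hybrid mechanism and indicate why each of the two conclusions follows; the fine-grained iterated-logarithm factors I would ultimately defer to \cite{continual}. The starting point is the \emph{binary mechanism}: imagine a complete binary tree whose leaves are the time indices $1,\dots,T$, where each node stores a \emph{partial sum} (``p-sum'') equal to the sum of the $y_i$ over the dyadic interval of leaves beneath it, perturbed by an independent $\mathrm{Lap}(b)$ draw. The key combinatorial facts are that (i) every prefix $[1,t]$ decomposes into at most $\lceil \log t \rceil$ canonical dyadic intervals, so each running sum $\sum_{i \le t} y_i$ is reconstructed by adding at most $\log t$ noisy p-sums, and (ii) every single coordinate $y_\ell$ lies inside at most $\lceil \log T \rceil$ p-sums, namely those on its root-to-leaf path.

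For part~1 (privacy), I would argue from fact~(ii). Changing a single row of the tableau changes one $y_\ell$, which perturbs at most $\log T$ of the released p-sums; since $y_i \in [0,1]$ (or, in the vector case, has bounded $\ell_1$ norm), each p-sum has $\ell_1$-sensitivity $1$ to this change. Calibrating the Laplace scale to $b = (\log T)/\epsilon$ and composing the privacy loss across the at most $\log T$ affected p-sums via the basic composition theorem yields $\epsilon$-differential privacy for the whole sequence of p-sums. The released quantities $\tilde{Y}^t = \frac{1}{t}(\sum_{i\le t} y_i + \eta_t)$ are deterministic post-processings of these p-sums (summation of the relevant noisy p-sums, then division by $t$), and differential privacy is closed under post-processing, giving conclusion~1.

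For part~2 (accuracy), fix $t$ and observe that the error is $\tilde{Y}^t - \hat{Y}^t = \eta_t/t$, where $\eta_t$ is a sum of at most $\log t$ independent Laplace variables of scale $O((\log t)/\epsilon)$. A sum of $k$ independent $\mathrm{Lap}(b)$ variables is sub-exponential, so a standard tail bound gives $|\eta_t| = O\!\big(b\sqrt{k}\,\log(1/\delta')\big) = O\!\big(\tfrac{\log^{1.5} t}{\epsilon}\log(1/\delta')\big)$ with probability $1-\delta'$; dividing by $t$ produces the leading $\frac{\log^{1.5} t}{\epsilon t}$ factor in \eqref{binary}, and a union bound over all $t \le T$ is folded into the $\log(1/\delta)$ and iterated-log terms.

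The main obstacle --- and the reason this is stated as a corollary rather than re-proved --- is the \emph{hybrid} (horizon-free) part of the construction. Because $T$ is not fixed in advance, one cannot set $b = (\log T)/\epsilon$ up front; the hybrid mechanism instead runs a sequence of binary mechanisms over geometrically growing epochs, re-initializing at each epoch boundary while carrying a running offset. Reconciling the privacy budget across this doubling schedule, and tracking the resulting union bound over epochs, is precisely what introduces the extra $\log^*(\log t)$ and $\mathrm{Ln}(\log\log t)$ factors in \eqref{binary}. I would cite \cite{continual} for this accounting rather than reproduce these lower-order terms.
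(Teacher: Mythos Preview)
The paper does not prove this statement at all: it is quoted verbatim as Corollary~4.5 of \cite{continual} and used as a black box, so there is no ``paper's own proof'' to compare against. Your sketch is a faithful outline of the argument in \cite{continual} itself---the binary-tree partial-sum construction, privacy via the $\log T$ p-sums on the root-to-leaf path plus basic composition and post-processing, accuracy via sub-exponential tails on a sum of $O(\log t)$ Laplace variables, and the geometric-epoch doubling trick for the horizon-free hybrid version that introduces the iterated-log factors---and you are right that the sub-logarithmic bookkeeping is best left to the citation. In short, your proposal is correct and goes well beyond what the present paper does, which is simply to invoke the result.
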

For the rest of the paper, we denote the RHS of (\ref{binary}) as $\tilde{O}(\frac{1}{t\epsilon}\log^{1.5}t\log \frac{1}{\delta})$, hiding the messier sub-logarithmic terms.

\section{Privacy Reduces Bias in Stochastic Bandit Problems}
\label{regret}
We begin by showing that differentially private algorithms that operate in the stochastic bandit setting compute empirical means for their arms that are nearly unbiased.
Together with known differentially private algorithms for stochastic bandit problems, the result is an algorithm that obtains a nearly optimal (worst-case) regret guarantee while also guaranteeing that the collected data is nearly unbiased. We could (and do) obtain these results by combining the reduction to answering adaptively selected queries given by Theorem~\ref{duh} with the standard generalization theorems in adaptive data analysis (e.g. Corollary~\ref{pvalcor} in its most general form), but we first prove these de-biasing results from first principles to build intuition.


\begin{theorem}
\label{notransfer}
Let $\A$ be an $(\epsilon,\delta)$-differentially private algorithm in the stochastic bandit setting. Then, for all $i \in [K]$, and all $t$, we have:


$$\left|\E{\hat{Y}_i^{t}-\mu_i}\right| \leq (e^\epsilon - 1+ T\delta)\mu_i$$
\end{theorem}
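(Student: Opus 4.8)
The plan is to write the empirical mean as a weighted average of the pre-drawn rewards and then control the weights using reward differential privacy. Adopting the deferred-randomness (tableau) view of Section~\ref{ada}, fix an arm $i$ and a round $t$, and write
$$\hat{Y}_i^{t} = \sum_{\ell=1}^{t} D_{\ell,i}\, w_\ell, \qquad w_\ell := \frac{\mathbbm{1}[i_\ell = i]}{N_i^t},$$
so that $\sum_{\ell=1}^t w_\ell = \mathbbm{1}[N_i^t \ge 1]$, which equals $1$ once the arm has been pulled at least once (a convention implicit in $\hat{Y}_i^t$ being defined, and which I will assume). The crucial structural observation is that each weight $w_\ell$ is a deterministic function of the \emph{action history} alone: both $\mathbbm{1}[i_\ell = i]$ and the count $N_i^t$ are determined by $(i_1,\dots,i_t)$, i.e.\ by the output of \textbf{Interact}. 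Moreover $\mathbbm{1}[i_\ell=i]$ does not depend on the reward $D_{\ell,i}$ (that entry is revealed only \emph{after} the round-$\ell$ decision), so the sole dependence of $w_\ell$ on $D_{\ell,i}$ runs through the denominator $N_i^t$, via decisions at later rounds.

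First I would handle the upper bound. Condition on the single cell $D_{\ell,i}=y$ and set $g_\ell(y) := \E{w_\ell \mid D_{\ell,i}=y}$; this is the expectation of a $[0,1]$-valued post-processing of the output of \textbf{Interact}, taken over the internal randomness of $\A$ and the remaining (independent) cells of the tableau. Two tableaux differing only in cell $(\ell,i)$ are reward neighbors (by the Remark following the neighboring definition, a single-entry change suffices), so $(\epsilon,\delta)$-privacy gives, for all $y,y'$, $g_\ell(y) \le e^{\epsilon} g_\ell(y') + \delta$. Integrating the right-hand side against $P_i$ in the variable $y'$ yields $g_\ell(y) \le e^{\epsilon}\,\E{w_\ell} + \delta$ for every $y$. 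By the tower property, $\E{D_{\ell,i}\, w_\ell} = \int y\, g_\ell(y)\, dP_i(y) \le \big(e^{\epsilon}\E{w_\ell}+\delta\big)\mu_i$. Summing over the $t \le T$ rounds and using $\sum_\ell \E{w_\ell} \le 1$ gives $\E{\hat{Y}_i^t} \le e^{\epsilon}\mu_i + T\delta\,\mu_i$, that is, $\E{\hat{Y}_i^t - \mu_i} \le (e^\epsilon - 1 + T\delta)\mu_i$.

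For the matching lower bound I would rerun the argument with the privacy inequality reversed: from $g_\ell(y') \le e^{\epsilon}g_\ell(y) + \delta$ one gets $g_\ell(y) \ge e^{-\epsilon}\E{w_\ell} - \delta$, hence $\E{\hat{Y}_i^t} \ge e^{-\epsilon}\mu_i - T\delta\,\mu_i$ (here using $\sum_\ell \E{w_\ell}=1$, i.e.\ that the arm is sampled at least once), and since $1 - e^{-\epsilon} \le e^{\epsilon}-1$ this lies within the claimed symmetric bound. The step I expect to require the most care is precisely the correlation between the denominator $N_i^t$ and the reward $D_{\ell,i}$: without privacy this correlation can be large and is exactly the source of the bias documented by \cite{gather}; the argument goes through only because conditioning on a single cell value perturbs the distribution of the action history—and therefore of the post-processed weight $w_\ell$—by at most the multiplicative $e^{\pm\epsilon}$ and additive $\delta$ factors guaranteed by reward differential privacy. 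A secondary point to verify is the bookkeeping behind the coefficient $T$ on $\delta$: the additive slack is incurred once per round $\ell$, and there are at most $T$ such rounds.
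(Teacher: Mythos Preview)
Your proof is correct and follows essentially the same route as the paper's: the same decomposition $\hat{Y}_i^t=\sum_\ell w_\ell\, y_{i,\ell}$ with $w_\ell=\mathbbm{1}[i_\ell=i]/N_i^t$, the same observation that $w_\ell$ is a $[0,1]$-valued post-processing of the action history, and the same use of reward differential privacy to compare $\E{w_\ell\mid y_{i,\ell}=y}$ to the unconditional $\E{w_\ell}$, summed over $\ell$ to pick up the $T\delta$ term. The only cosmetic difference is that where the paper fixes a single ``good'' $y'$ with $\E{w_\ell\mid y'}\ge \E{w_\ell}$ (by averaging) and applies privacy between $y$ and that $y'$, you instead integrate the inequality $g_\ell(y)\le e^\epsilon g_\ell(y')+\delta$ over $y'\sim P_i$; the two devices are equivalent and yield the same bound.
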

\begin{remark}
Note that since $\mu_i \in [0,1]$, and for $\epsilon \ll 1$, $e^\epsilon \approx 1+\epsilon$, this theorem bounds the bias by roughly $\epsilon+T\delta$. Often, we will have $\delta = 0$ and so the bias will be bounded by roughly $\epsilon$.
\end{remark}

\begin{proof}
First we fix some notation. Fix any time horizon $T$, and let $(f_t)_{t \in [T]}$ be the sequence of selection functions induced by algorithm $\A$.  Let $\mathbbm{1}_{\{f_t(\Lambda_t) = i\}}$ be the indicator for the event that arm $i$ is pulled at time $t$. We can write the random variable representing the sample mean of arm $i$ at time $T$ as
$$ \hat{Y}_i^{T} = \sum_{t = 1}^{T}\frac{\mathbbm{1}_{\{f_t(\Lambda_t) = i\}}}{\sum_{t' = 1}^{T}\mathbbm{1}_{\{f_{t'}(\Lambda_{t'}) = i\}}}y_{it}$$
where we recall that $y_{i,t}$ is the random variable representing the reward for arm $i$ at time $t$. Note that the numerator ($f_t(\Lambda_t) = i$) is by definition independent of $y_{i,t}$, but the denominator ($\sum_{t' = 1}^{T}\mathbbm{1}_{\{f_{t'}(\Lambda_{t'}) = i\}}$) is not, because for $t' > t$ $\Lambda_{t'}$ depends on $y_{i,t}$. It is this dependence that leads to bias in adaptive data gathering procedures, and that we must argue is mitigated by differential privacy.

We recall that the random variable $N_i^T$ represents the number of times arm $i$ is pulled through round $T$: $N_i^T = \sum_{t' = 1}^{T}\mathbbm{1}_{\{f_{t'}(\Lambda_{t'}) = i\}}$. Using this notation, we write the sample mean of arm $i$ at time $T$, as:
$$\hat{Y}_i^{T} = \sum_{t = 1}^{T}\frac{\mathbbm{1}_{\{f_t(\Lambda_t) = i\}}}{N_i^{T}}\cdot y_{it} $$
We can then calculate:
\begin{eqnarray*}
\bE[\hat{Y}_i^{t}] &=& \sum_{t = 1}^{T}\bE[\frac{\mathbbm{1}_{\{f_t(\Lambda_t) = i\}}}{N_i^{T}}y_{it}] \\
&=& \sum_{t=1}^T \bE_{y_{it} \sim P_i}[y_{it}\cdot \bE_{\A}[\frac{\mathbbm{1}_{\{f_t(\Lambda_t) = i\}}}{N_i^{T}}|y_{it}]]
\end{eqnarray*}
where the first equality follows by the linearity of expectation, and the second follows by the law of iterated expectation.

Our goal is to show that the conditioning in the inner expectation does not substantially change the value of the expectation. Specifically, we want to show that all $t$, and any value $y_{it}$, we have $$\bE[\frac{\mathbbm{1}_{\{f_t(\Lambda_t) = i\}}}{N_i}|y_{it}] \geq e^{-\epsilon}\bE[\frac{\mathbbm{1}_{\{f_t(\Lambda_t) = i\}}}{N_i^{T}}]-\delta$$ If we can show this, then we will have

$$\bE[\hat{Y}_i^{T}] \geq (e^{-\epsilon} \sum_{t=1}^{T}\bE[\frac{\mathbbm{1}_{\{f_t(\Lambda_t) = i\}}}{N_i^{T}}]- T\delta)\cdot\mu_i$$$$ = (e^{-\epsilon} \bE[\frac{N_i^T}{N_i^T}]-T\delta)\cdot \mu_i = (e^{-\epsilon}-T\delta)\cdot \mu_i$$
which is what we want (The reverse inequality is symmetric).

This is what we now show to complete the proof. Observe that for all $t,i$, the quantity $\frac{\mathbbm{1}_{\{f_t(\Lambda_t) = i\}}}{N_i}$ can be derived as a post-processing of the sequence of choices $(f_1(\Lambda_1),\ldots,f_T(\Lambda_T))$, and is therefore differentially private in the observed reward sequence.  Observe also that the quantity $\frac{\mathbbm{1}_{\{f_t(\Lambda_t) = i\}}}{N_i^{T}}$ is bounded in $[0,1]$. Hence by Lemma~\ref{expectation} for any pair of values $y_{it},y'_{it}$, we have $\bE[\frac{\mathbbm{1}_{\{f_t(\Lambda_t) = i\}}}{N_i^{T}}|y_{it}] \geq e^{-\epsilon}\bE[\frac{\mathbbm{1}_{\{f_t(\Lambda_t) = i\}}}{N_i^{T}}|y'_{it}]-\delta.$ All that remains is to observe that there must exist some value $y'_{it}$ such that $\bE[\frac{\mathbbm{1}_{\{f_t(\Lambda_t) = i\}}}{N_i}|y'_{it}] \geq \bE[\frac{\mathbbm{1}_{\{f_t(\Lambda_t) = i\}}}{N_i}]$. (Otherwise, this would contradict $\bE_{y'_{it} \sim P_i}[\bE[\frac{\mathbbm{1}_{\{f_t(\Lambda_t) = i\}}}{N_i}|y'_{it}]] = \bE[\frac{\mathbbm{1}_{\{f_t(\Lambda_t) = i\}}}{N_i^{T}}]$). Fixing any such $y'_{it}$ implies that for all $y_{it}$ $$\bE[\frac{\mathbbm{1}_{\{f_t(\Lambda_t) = i\}}}{N_i}|y_{it}] \geq e^{-\epsilon}\bE[\frac{\mathbbm{1}_{\{f_t(\Lambda_t) = i\}}}{N_i^{T}} | y'_{i,t}]-\delta$$$$ \geq e^{-\epsilon}\bE[\frac{\mathbbm{1}_{\{f_t(\Lambda_t) = i\}}}{N_i^{T}}]-\delta$$
 as desired. The upper bound on the bias follows symmetrically from Lemma~\ref{expectation}.
\end{proof}
\subsection{A Private UCB Algorithm}
There are existing differentially private variants of the classic UCB algorithm (\cite{Auer02, Agarwal,LaiRob}), which give a nearly optimal tradeoff between privacy and regret \cite{Mishra2,Tossou17, Tossou16}. 
For completeness, we give a simple version of a private UCB algorithm in the Appendix which we use in our experiments. Here, we simply quote the relevant theorem, which is a consequence of a theorem in \cite{Tossou16}:


  \begin{theorem}{\cite{Tossou16}}
  \label{regretTossou}
  Let $\{\mu_i: i \in [K]\}$ be the means of the $k$-arms. Let $\mu^* = \max_k \mu_k$, and for each arm $k$ let $\Delta = \min_{\mu_k < \mu^*} \mu^*-\mu_k$. Then there is an $\epsilon$-differentially private algorithm that obtains expected regret bounded by:
\begin{multline}
  \sum_{k \in [K]: \mu_k < \mu*} \min\left(\max\left(B(\ln(B)+7), \frac{32}{\Delta_k}\log T\right) +  \left(\Delta_k + \frac{2\pi^2\Delta_k}{3}\right), \Delta_k N_k^T\right)
\end{multline}
 where $B = \frac{\sqrt{8}}{2\epsilon}\ln(4T^4)$.
Taking the worst case over instances (values $\Delta_k$) and recalling that $\sum_k N_k^T = T$, this implies expected regret bounded by:
$$O\left(\max\left(\frac{\ln T}{\epsilon}\cdot \left(\ln\ln(T) + \ln(1/\epsilon)\right), \sqrt{kT\log T}\right)\right)$$
  \end{theorem}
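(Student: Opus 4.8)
The instance-dependent (per-arm) bound displayed in the theorem is quoted directly from \cite{Tossou16}, so the only work is to pass from it to the stated gap-independent guarantee. This is the standard ``gap-dependent to gap-independent'' conversion, and the plan is to carry it out in two stages: first simplify the privacy term, then run a thresholding argument over the gaps.

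First I would simplify the privacy term $B(\ln(B)+7)$. Since $B = \frac{\sqrt 8}{2\epsilon}\ln(4T^4) = \Theta\!\left(\frac{\log T}{\epsilon}\right)$, we have $\ln B = O(\ln\ln T + \ln(1/\epsilon))$, and therefore
$$B(\ln(B)+7) = O\left(\frac{\log T}{\epsilon}\left(\ln\ln T + \ln(1/\epsilon)\right)\right),$$
which is exactly the first term in the claimed maximum. The additive corrections $\Delta_k + \frac{2\pi^2\Delta_k}{3} = O(\Delta_k) = O(1)$ are lower order and fold into the $O(\cdot)$.

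Second I would bound the sum $\sum_{k:\mu_k<\mu^*}\min(\cdots)$ by the usual thresholding argument. Fix a cutoff $\Delta_0>0$ and split the suboptimal arms into small-gap arms ($\Delta_k \le \Delta_0$) and large-gap arms ($\Delta_k > \Delta_0$). For the small-gap arms I would use the right branch of the inner $\min$, namely $\Delta_k N_k^T \le \Delta_0 N_k^T$, and sum using $\sum_k N_k^T = T$ to get a contribution of at most $\Delta_0 T$. For the large-gap arms I would use the left branch, where the statistical term obeys $\frac{32}{\Delta_k}\log T \le \frac{32}{\Delta_0}\log T$; summing over arms this contributes on the order of $\max\!\left(\frac{\log T}{\epsilon}(\ln\ln T + \ln(1/\epsilon)),\ \frac{\log T}{\Delta_0}\right)$ (up to the number of such arms). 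Choosing $\Delta_0 = \Theta(\sqrt{K\log T / T})$ balances $\Delta_0 T$ against $\frac{K\log T}{\Delta_0}$, both of which evaluate to $O(\sqrt{KT\log T})$, and taking the maximum of the two surviving terms yields the stated bound.

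The main obstacle is the bookkeeping around the inner $\max$. Because $B(\ln B+7)\gg \log T$ whenever $\epsilon$ is small, the threshold $\frac{32\log T}{B(\ln B+7)}$ at which the two arguments of the $\max$ cross is much less than $1$, so essentially every large-gap arm is governed by the privacy term rather than by $\frac{32}{\Delta_k}\log T$. One must check that whichever argument dominates, the total still collapses to the maximum of the two terms in the statement rather than producing a genuine cross term; in particular the privacy contribution is morally a per-arm cost, and care is needed to track (or absorb into the $O(\cdot)$) its dependence on $K$ so that the final expression matches the claimed form.
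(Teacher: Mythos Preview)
The paper does not actually prove this theorem: the per-arm bound is cited from \cite{Tossou16}, and the worst-case statement is simply asserted (``Taking the worst case over instances \ldots this implies \ldots'') with no argument given. Your proposal supplies exactly the standard gap-dependent-to-gap-independent conversion that the paper leaves implicit, and the two-stage plan (simplify $B(\ln B+7)$, then threshold on $\Delta_0=\Theta(\sqrt{K\log T/T})$ using the $\min$ against $\Delta_k N_k^T$) is the canonical one and is correct.

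Your closing caveat is not merely bookkeeping but a genuine issue that the paper glosses over: the privacy cost $B(\ln B+7)$ is incurred once per suboptimal arm in the sum, so a straightforward execution of the thresholding argument yields a contribution of order $K\cdot\frac{\log T}{\epsilon}(\ln\ln T+\ln(1/\epsilon))$ from the large-gap arms, not the $K$-free term displayed in the theorem. The paper's stated $O(\cdot)$ bound matches only if one treats $K$ as a constant absorbed into the big-$O$, or if one is content that for the parameter choice $\epsilon=\Theta(\log^{1.5}T/\sqrt{KT})$ used downstream the two branches of the $\max$ are comparable anyway. You have identified this correctly; it is a looseness in the paper's statement rather than a flaw in your derivation.
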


Thus, we can take $\epsilon$ to be as small as $\epsilon = O(\frac{\ln^{1.5} T}{\sqrt{k T}})$ while still having a regret bound of $O(\sqrt{kT\log T})$, which is nearly optimal in the worst case (over instances) \cite{banditlb}.

 Combining the above bound with Theorem~\ref{notransfer}, and letting $\epsilon = O(\frac{\ln^{1.5} T}{\sqrt{k T}})$, we have:
  \begin{corollary}
  \label{simplecor}
  There exists a simple stochastic bandit algorithm that simultaneously guarantees that the bias of the empirical average for each arm $i$ is bounded by $O(\mu_i \cdot \frac{\ln^{1.5} T }{\sqrt{kT}})$ and guarantees expected regret bounded by $O(\sqrt{kT\log T})$.
  \end{corollary}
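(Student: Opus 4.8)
The plan is to instantiate the private UCB algorithm guaranteed by Theorem~\ref{regretTossou} at a carefully chosen privacy level, and then read off the two guarantees from the two theorems already in hand. Since Theorem~\ref{regretTossou} produces an $\epsilon$-differentially private algorithm (so $\delta = 0$) for any $\epsilon > 0$, and Theorem~\ref{notransfer} bounds the bias of any such algorithm by $(e^\epsilon - 1 + T\delta)\mu_i$, the entire argument reduces to selecting a single value of $\epsilon$ that keeps the regret at $O(\sqrt{kT\log T})$ while making $\epsilon$ itself small enough to control the bias.

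First I would set $\epsilon = \Theta\left(\frac{\ln^{1.5} T}{\sqrt{kT}}\right)$, exactly the threshold identified in the discussion preceding the corollary. To verify the regret claim, I would substitute this $\epsilon$ into the worst-case bound $O\left(\max\left(\frac{\ln T}{\epsilon}(\ln\ln T + \ln(1/\epsilon)), \sqrt{kT\log T}\right)\right)$ from Theorem~\ref{regretTossou}. The second term is already $O(\sqrt{kT\log T})$, so the only quantity to check is the first term: with the chosen $\epsilon$ we have $\frac{\ln T}{\epsilon} = O\left(\frac{\sqrt{kT}}{\sqrt{\ln T}}\right)$, and $\ln(1/\epsilon) = O(\ln T)$ since $\epsilon$ is polynomially small in $kT$, so $\ln\ln T + \ln(1/\epsilon) = O(\ln T)$ and the first term becomes $O\left(\frac{\sqrt{kT}}{\sqrt{\ln T}} \cdot \ln T\right) = O(\sqrt{kT\log T})$. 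Hence the maximum is $O(\sqrt{kT\log T})$, as required.

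For the bias, I would apply Theorem~\ref{notransfer} to this same algorithm. Because it is $\epsilon$-differentially private with $\delta = 0$, the theorem gives $\left|\E{\hat{Y}_i^{t} - \mu_i}\right| \leq (e^\epsilon - 1)\mu_i$ for every arm $i$ and every $t$. Since $\epsilon = \Theta\left(\frac{\ln^{1.5} T}{\sqrt{kT}}\right) \ll 1$ for large $T$, the elementary inequality $e^\epsilon - 1 \leq 2\epsilon$ (valid for $\epsilon \leq 1$) yields a bias bound of $O\left(\mu_i \cdot \frac{\ln^{1.5} T}{\sqrt{kT}}\right)$, matching the claim.

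The argument carries essentially no difficulty, being a direct composition of two existing theorems; the only step demanding any care is the logarithmic bookkeeping when substituting $\epsilon$ into the first branch of the regret bound, specifically confirming that $\ln(1/\epsilon)$ contributes only an $O(\ln T)$ factor rather than something larger. This rests on $\epsilon$ being polynomially bounded below in $kT$, which holds for the stated choice, so everything else is a direct invocation of Theorems~\ref{regretTossou} and~\ref{notransfer}.
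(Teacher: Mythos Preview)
Your proposal is correct and follows exactly the paper's approach: the paper simply states that the corollary follows by combining Theorem~\ref{regretTossou} with Theorem~\ref{notransfer} at $\epsilon = O(\ln^{1.5} T/\sqrt{kT})$, and you have carried out precisely that combination, supplying the logarithmic bookkeeping the paper leaves implicit.
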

Of course, other tradeoffs are possible using different values of $\epsilon$. For example, the algorithm of \cite{Tossou16} obtains sub-linear regret so long as $\epsilon = \omega(\frac{\ln^2 T}{T})$. Thus, it is possible to obtain non-trivial regret while guaranteeing that the bias of the empirical means remains as low as $\mathrm{polylog}(T)/T$.

\section{Privacy Reduces Bias in Linear Contextual Bandit Problems}
In this section, we extend Theorem~\ref{notransfer} to directly show that differential privacy controls a natural measure of ``bias'' in linear contextual bandit problems as well. We then design and analyze a new differentially private algorithm for the linear contextual bandit problem, based on the Lin-UCB algorithm \cite{LI10}. This will allow us to give an algorithm which simultaneously offers bias and regret bounds.

In the linear contextual bandit setting, we first need to \emph{define} what we mean by bias. Recall that rather than simply maintaining an empirical mean for each arm, in the linear contextual bandit case, the algorithm is maintaining an estimate $\theta_{i,t}$ a linear parameter vector $\theta_i$ for each arm. One tempting measure of bias in this case is: $||\theta_i-\E{\hat{\theta}_{it}}||_2$, but even in the non-adaptive setting if the design matrix at arm $i$ is not of full rank, the OLS estimator will not be unique. In this case, the attempted measure of bias is not even well defined. Instead, we note that even when the design matrix is not of full rank, the predicted values on the training set $\hat{y} = x_{i,t} \hat{\theta}_{i,t}$ are unique. As a result we define bias in the linear contextual bandit setting to be the bias of \emph{the predictions that the least squares estimator, trained on the gathered data, makes on the gathered data}. We note that if the data were not gathered adaptively, then this quantity would be 0. We choose this one for illustration; other natural measures of bias can be defined, and they can be bounded using the tools in section \ref{mutual}.  

 We write $\Lambda_{i,T}$ to denote the sequence of context/reward pairs for arm $i$ that a contextual bandit algorithm $\A$ has observed through time step $T$. Note that $|\Lambda_{i,T}| = N_i^T$. It will sometimes be convenient to separate out contexts and rewards: we will write $C_{i,T}$ to refer to just the sequence of contexts observed through time $T$, and $D_{i,T}$ to refer to just the corresponding sequence of rewards observed through time $T$. Note that once we fix $\Lambda_{i,T}$, $C_{i,T}$ and $D_{i,T}$ are determined, but fixing $C_{i,T}$ leaves $D_{i,T}$ a random variable. The randomness in $C_{i,T}$ is over which contexts from arm $i$ $\A$ has selected by round $T$, not over the actual contexts $\xit$ - these are fixed. Thus the following results will hold over a worst-case set of contexts, including when the contexts are drawn from an arbitrary distribution. We will denote the sequence of arms pulled by $\A$ up to time $T$ by $\Lambda_T^{\A}$. We note that $\Lambda_T^{\A}$ fixes $C_{i,T}$ independently of the observed rewards $D_{i,T}$, and so if $\A$ is differentially private in the observed rewards, the post-processing $C_{i,T}$ is as well. First, we define the least squares estimator:

\begin{definition}
Given a sequence of observations $\Lambda_{i,T}$, a least squares estimator $\hat \theta_{i}$ is any vector that satisfies:
$$\hat\theta_{i} \in \arg\min_{\theta} \sum_{(\xit, \yit) \in \Lambda_{i,T}} (\theta \cdot \xit - \yit)^2$$
\end{definition}

\begin{definition}[Bias]\label{biascontext}

Fix a time horizon $T$, a tableau of contexts, an arm $i$, and a contextual bandit algorithm $\A$. Let $\hat{\theta}_i$ be the least squares estimator trained on the set of observations $\Lambda_{i,T}$. Then the bias of arm $i$ is defined to be the maximum bias of the \emph{predictions} made by $\hat{\theta}_i$ on the contexts in $C_{i,T}$, over any worst case realization of $C_{i,T}$. The inner expectation is over $D_{i,T}$ since $\hat{\theta}_i$ depends on the rewards at arm $i$.
$$\bias(i,T) = \max_{C_{i,T}, \;\xit \in C_{i,T}}{ \left|\Ex{D_{i,T}}{(\hat{\theta}_{i}-\theta_i)\xit}\right|}$$
\end{definition}

 It then follows from an elementary application of differential privacy similar to that in the proof of Theorem~\ref{notransfer}, that if the algorithm $\A$ makes its arm selection decisions in a way that is differentially private in the observed sequences of rewards, the least squares estimators computed based on the observations of $\A$ have bounded bias as defined above. The proof is deferred to the Appendix.

\begin{theorem}\label{contextbias}
Let $\A$ be any linear contextual bandit algorithm whose selections are $\epsilon$-differentially private in the rewards. Fix a time horizon $T$, and let $\hat\theta_i$ be a least squares estimator computed on the set of observations $\Lambda_{i,T}$. Then for every arm $i \in [K]$ and any round $t$:
$$\bias(i,T) \leq e^\epsilon - 1$$
\end{theorem}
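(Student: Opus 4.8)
The plan is to mirror the proof of Theorem~\ref{notransfer}, replacing the empirical mean of arm $i$ with the least-squares \emph{prediction} at a selected context. Fix an arm $i$, a time horizon $T$, a worst-case tableau of contexts, and a context $x_{i,t}$ that is selected for arm $i$. Let $X_i$ be the design matrix whose rows are the contexts selected for arm $i$, and let $H = X_i(X_i^T X_i)^{+}X_i^T$ be the associated hat matrix. The least-squares prediction at $x_{i,t}$ is then a linear function of the observed rewards, $\hat\theta_i\cdot x_{i,t} = \sum_s W_{t,s}\,y_{i,s}$, where $W_{t,s}$ are the entries of $H$ (zero for unselected $s$). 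The crucial structural observation, exactly parallel to the fact that $\mathbbm{1}_{\{f_t(\Lambda_t)=i\}}/N_i^T$ is a post-processing of the selections in Theorem~\ref{notransfer}, is that since the contexts are \emph{fixed}, each $W_{t,s}$ is a deterministic function of the action history $\Lambda_T^{\A}$ alone. Because $\A$ is $\epsilon$-differentially private in the rewards, each $W_{t,s}$ is therefore $\epsilon$-differentially private in the rewards as well, and in particular in $y_{i,s}$.

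First I would subtract off the true prediction using the defining property of the projection. Since the vector of true means $m = X_i\theta_i$ lies in the column space of $X_i$, the projection reproduces it exactly, so $\sum_s W_{t,s}(\theta_i\cdot x_{i,s}) = \theta_i\cdot x_{i,t}$ for \emph{every} realization of the selection. Subtracting this identity and applying linearity of expectation gives
$$\Ex{D_{i,T}}{(\hat\theta_i-\theta_i)\cdot x_{i,t}} = \sum_s \E{W_{t,s}\left(y_{i,s}-\theta_i\cdot x_{i,s}\right)}.$$
Next, exactly as in Theorem~\ref{notransfer}, I would apply iterated expectation to each summand, conditioning on $y_{i,s}$:
$$\E{W_{t,s}\left(y_{i,s}-\theta_i\cdot x_{i,s}\right)} = \Ex{y_{i,s}}{\left(y_{i,s}-\theta_i\cdot x_{i,s}\right)\,\E{W_{t,s}\mid y_{i,s}}}.$$
Because $\theta_i\cdot x_{i,s}=\E{y_{i,s}}$, if $W_{t,s}$ were independent of $y_{i,s}$ this term would vanish; the bias is precisely the residual correlation, which Lemma~\ref{expectation} controls. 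Since $W_{t,s}$ is a bounded, private post-processing of the output of $\A$, that lemma bounds $\E{W_{t,s}\mid y_{i,s}}$ uniformly in $y_{i,s}$, up to factors $e^{\pm\epsilon}$, by its unconditional expectation — this is the step that decouples each weight from its reward.

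To recombine into the clean bound $e^\epsilon-1$, I would split each weight into its positive and negative parts $W_{t,s}^{+},W_{t,s}^{-}\in[0,1]$, each itself a private post-processing, and apply the uniform multiplicative bound separately to each (using $y_{i,s}\geq 0$ to preserve the inequalities, just as the nonnegativity of $y_{it}$ was used in Theorem~\ref{notransfer}). The exactness identity then collapses the positive-weight contribution against the true value $\theta_i\cdot x_{i,t}\in[0,1]$, and the two-sided privacy bound leaves only the net multiplicative perturbation of size $e^\epsilon-1$. I expect the \textbf{main obstacle} to be exactly this final recombination: unlike the weights $\mathbbm{1}_{\{f_t(\Lambda_t)=i\}}/N_i^T$ in Theorem~\ref{notransfer}, which are nonnegative and sum to one (so that the analogous sum telescopes immediately to $e^{-\epsilon}-T\delta$), the regression weights $W_{t,s}$ are \emph{signed} and need not be normalized. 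Controlling the contribution of the negative (high-leverage) entries while keeping the bound at $e^\epsilon-1$ is the delicate point, and it is what forces the argument to lean on the fact that the projection reproduces the in-subspace mean vector $m=X_i\theta_i$ exactly, rather than on a convexity argument as in the simple stochastic case.
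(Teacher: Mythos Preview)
Your overall structure is right---express the least-squares prediction through the hat matrix, invoke the projection identity $HX_i\theta_i=X_i\theta_i$, and use privacy to decouple weights from rewards---but you condition in the opposite direction from the paper, and that is what creates the obstacle you flag.

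The paper does \emph{not} condition on an individual reward $y_{i,s}$ and then bound $\E{W_{t,s}\mid y_{i,s}}$. Instead it conditions on the \emph{entire} action history $\Lambda_T^{\A}$. Because the contexts are fixed, conditioning on $\Lambda_T^{\A}$ makes the design matrix---and hence every weight $W_{t,s}$---a deterministic constant. Linearity then pushes the conditional expectation all the way through to the rewards:
\[
\Ex{D_{i,T}}{\hat\theta_i'x_{ik}\mid \Lambda_T^{\A}} \;=\; x_{ik}'\Bigl(\textstyle\sum_t x_{it}x_{it}'\mathbbm{1}_{\Lambda_T^{\A}(t)=i}\Bigr)^{\dag}\Bigl(\textstyle\sum_t x_{it}\mathbbm{1}_{\Lambda_T^{\A}(t)=i}\,\Ex{}{y_{it}\mid \Lambda_T^{\A}}\Bigr).
\]
Now privacy is applied once, to the scalar $\Ex{}{y_{it}\mid \Lambda_T^{\A}}$ (since $\Lambda_T^{\A}$ is an $\epsilon$-private function of the rewards and $y_{it}\in[0,1]$), yielding $\Ex{}{y_{it}\mid \Lambda_T^{\A}}\le e^{\epsilon}\,x_{it}'\theta_i$. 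Substituting and using $(X'X)^{\dag}(X'X)\theta_i$ to collapse the sum gives $e^{\epsilon}x_{ik}'\theta_i$ in one step, after which one integrates out $\Lambda_T^{\A}$. The point is that by fixing the weights first, the projection identity applies to a single realization rather than in expectation, so there is no term-by-term recombination to perform.

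By contrast, your route---bounding each $\E{W_{t,s}\mid y_{i,s}}$ within $[e^{-\epsilon},e^{\epsilon}]\cdot\E{W_{t,s}}$ and summing---loses exactly the structure you need. After the $W^{+}/W^{-}$ split, the bound you obtain is of order $(e^{\epsilon}-1)\sum_s\E{|W_{t,s}|}\,\mu_{i,s}$, and the $\ell_1$-norm of a hat-matrix row is not bounded by $1$ (only its $\ell_2$-norm is, via $H^2=H$). So the split does not close to $e^{\epsilon}-1$; the ``exactness identity'' controls $\sum_s W_{t,s}\mu_{i,s}$, not $\sum_s W_{t,s}^{+}\mu_{i,s}$ and $\sum_s W_{t,s}^{-}\mu_{i,s}$ separately. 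This is the genuine gap in your plan, and the paper avoids it precisely by reversing the conditioning so that the weights are fixed and only the scalar reward expectations need to be perturbed.

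One honest remark: the sign issue you worry about does not fully disappear in the paper's argument either---substituting the inequality $\Ex{}{y_{it}\mid\Lambda_T^{\A}}\le e^{\epsilon}x_{it}'\theta_i$ into a linear form with possibly negative coefficients $x_{ik}'(X'X)^{\dag}x_{it}$ is delicate, and the paper does not comment on it. But the paper's conditioning at least reduces the question to a single substitution with deterministic weights, whereas your decomposition forfeits control of the row-wise $\ell_1$ mass of $H$ and cannot recover the stated bound.
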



Below we outline a reward-private variant of the LinUCB algorithm \cite{CLRS11}, and state a corresponding regret bound. In combination with Theorem~\ref{contextbias} this will give an algorithm that yields a smooth tradeoff between regret and bias. This algorithm is similar to the private linear UCB algorithm presented in \cite{Mishra2}. The main difference compared to the algorithm in \cite{Mishra2} is that Theorem~\ref{contextbias} requires only reward privacy, whereas the algorithm from \cite{Mishra2} is designed to guarantee privacy of the contexts as well. The result is that we can add less noise, which also makes the regret analysis more  tractable --- none is given in \cite{Mishra2} --- and the regret bound better. Estimates of the linear function at each arm are based on the ridge regression estimator, which gives a lower bound on the singular values of the design matrix and hence an upper bound on the effect of the noise. As part of the regret analysis we use the self-normalized martingale inequality developed in \cite{ridge}; for details see the proof in the Appendix.

\begin{algorithm}[H]
  \caption{\bf{LinPriv: Reward-Private Linear UCB}}
 \label{linpriv}
 \begin{algorithmic}[1]
 		\STATE \textbf{Input}:  $T, K$ algo params $\lambda,\delta,$ privacy budget $\epsilon$
		 \FOR{$t = 1, \ldots, T$}
 		\FOR {$i = 1, \ldots K$}
		\STATE Let $X_{it}, Y_{it} =$ design matrix, observed payoffs vector at arm $i$ before round $t$
		\STATE Let $\hat{V}_{it} = (X_{it}X_{it} + \lambda\textbf{I})$
		\STATE Draw noise $\eta_{it} \sim \text{Hybrid}(N_i^{t}, \epsilon)$
		\STATE Define $\hat{\theta}_{it}= (\hat{V}_{it})^{-1}(X_{it}'Y_{it})$ \COMMENT{the regularized LS estimator}
		\STATE Let $\hat{\theta}_{it}^{priv} = (\hat{V}_{it})^{-1}(X_{it}'Y_{it} + \eta_{it})$ \COMMENT{the private regularized LS estimator}
        		\STATE Observe $\xit$
		\STATE Let $\hat{y}_{tk} = \langle \hat{\theta}_{it}^{priv}, \xit \rangle$
		\STATE Let $w_{it} = ||\xit||_{\hat{V}_t^{-1}}(\sqrt{2d\log(\frac{1 + t/\lambda}{\delta})} + \sqrt{\lambda})$ \COMMENT{width of CI around private estimator}
		\STATE Let $s_{it} = \tilde{O}(\frac{1}{\epsilon}\log^{1.5}N_i^{t}\log \frac{K}{\delta})$
		\STATE Let $\text{UCB}_i(t) =  \hat{y}_{tk}+ \frac{1}{\lambda}s_{it} + w_{it}$
		\ENDFOR
		\STATE Let $i_t = \text{argmax}_{i \in [K]}\text{UCB}_i(t)$
		\STATE Observe $y_{i_t}$
		\STATE Update: $Y_{i_tt} \to Y_{i_tt+1}, X_{i_tt} \to X_{i_t{t+1}}$
		\ENDFOR
 \end{algorithmic}
  \end{algorithm}
\begin{theorem}
\label{regretlinear}
Algorithm~\ref{linpriv} is $\epsilon$-reward differentially private and has regret:
$$
R(T) \leq \tilde{O}(d\sqrt{TK} + \sqrt{TKd\lambda} +
K\frac{1}{\sqrt{\lambda}}\frac{1}{\epsilon}\log^{1.5}(T/K)\log(K/\delta)\cdot2d\log(1 + T/Kd\lambda)), $$
with probability $1-\delta$.

\end{theorem}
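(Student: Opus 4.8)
I would argue that the entire transcript produced by running \textbf{Interact} with Algorithm~\ref{linpriv} is a post-processing of the noisy running sums maintained by the hybrid mechanism. The rewards enter the algorithm \emph{only} through the quantities $X_{it}'Y_{it}$, which are running sums of reward-weighted contexts; the contexts are not private and may be treated as fixed. For each arm $i$, the sequence $(X_{it}'Y_{it} + \eta_{it})_t$ with $\eta_{it}\sim\text{Hybrid}(N_i^t,\epsilon)$ is exactly an application of the hybrid mechanism to a vector-valued stream in which changing one reward perturbs the sum by at most a single bounded-norm context, so part~1 of the hybrid-mechanism guarantee (Corollary~4.5 of \cite{continual}) makes this sequence $\epsilon$-differentially private in arm $i$'s reward stream. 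Under the reward-neighbor relation a single changed row of $D$ alters the reward observed on only one round, hence the stream of only one arm; the overall loss is therefore that of a single hybrid mechanism (parallel composition across arms), and since $i_t=\mathrm{argmax}_i \mathrm{UCB}_i(t)$ is a post-processing of these private sums, closure of differential privacy under post-processing yields $\epsilon$-reward privacy.

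\textbf{Plan for regret: optimism.} I would follow the optimism-based LinUCB template, modified to carry the extra privacy noise. The first step is to show that $\mathrm{UCB}_i(t)$ is, with high probability and simultaneously for all arms and rounds, an upper bound on $\theta_i\cdot x_{i,t}$ that overshoots by at most twice the confidence width. Decompose $\hat y_{it}-\theta_i\cdot x_{i,t}=\langle\hat\theta_{it}^{priv}-\hat\theta_{it},x_{i,t}\rangle+\langle\hat\theta_{it}-\theta_i,x_{i,t}\rangle$. The second term is controlled by the self-normalized martingale inequality of \cite{ridge}, which is precisely what $w_{it}=\|x_{i,t}\|_{\hat{V}_{it}^{-1}}(\sqrt{2d\log((1+t/\lambda)/\delta)}+\sqrt\lambda)$ captures (using $\|\theta_i\|\le 1$ for the $\sqrt\lambda$ term). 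The first term equals $\langle\hat{V}_{it}^{-1}\eta_{it},x_{i,t}\rangle$; using $\hat{V}_{it}\succeq\lambda I$ together with part~2 of the hybrid-mechanism guarantee it is at most $\tfrac1\lambda s_{it}$ via the operator norm (this crude bound is what the algorithm adds to ensure optimism) and, more tightly, at most $\tfrac1{\sqrt\lambda}\|x_{i,t}\|_{\hat{V}_{it}^{-1}}s_{it}$ via the $\hat{V}_{it}^{-1}$-norm (Cauchy--Schwarz with $\|\eta_{it}\|_{\hat{V}_{it}^{-1}}\le \tfrac1{\sqrt\lambda}\|\eta_{it}\|$). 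A union bound over the $K$ arms, reflected in the $\log(K/\delta)$ inside $s_{it}$, makes all these estimates hold simultaneously, which gives optimism.

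\textbf{Plan for regret: summation.} Standard optimism then bounds the instantaneous regret at round $t$ by twice the confidence width at the pulled arm $i_t$. I would sum over $t$, \emph{partition the rounds by which arm was pulled}, and apply the elliptic-potential (determinant/trace) lemma separately to each arm's design matrix, giving $\sum_{t:i_t=i}\|x_{i,t}\|_{\hat{V}_{it}^{-1}}^2\le 2d\log(1+N_i^T/(d\lambda))$. For the $w_{it}$ contribution, Cauchy--Schwarz within each arm converts this into $\sqrt{N_i^T}\cdot\sqrt{2d\log(\cdot)}$ times the radius, and a second Cauchy--Schwarz across arms using $\sum_i\sqrt{N_i^T}\le\sqrt{KT}$ produces the $\tilde{O}(d\sqrt{TK}+\sqrt{TKd\lambda})$ terms. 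For the privacy-noise contribution I would use the tighter $\hat{V}_{it}^{-1}$-norm bound, pull out the poly-logarithmically-growing $s_{it}$ at its terminal value (with $N_i^T\approx T/K$), and invoke the per-arm potential again to obtain the final $K\tfrac1{\sqrt\lambda}\tfrac1\epsilon\log^{1.5}(T/K)\log(K/\delta)\cdot 2d\log(1+T/(Kd\lambda))$ term.

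\textbf{Main obstacle.} The delicate point is the noise term. One must resist the crude $\tfrac1\lambda s_{it}$ operator-norm estimate, since summed over all $T$ rounds it is linear in $T$; instead the factor $\|x_{i,t}\|_{\hat{V}_{it}^{-1}}$ must be carried through so that it is absorbed by the elliptic-potential lemma. Getting the $K$-arm bookkeeping right, so that the per-arm potentials aggregate into the stated privacy-dependent term rather than a looser form, together with verifying that the hybrid mechanism's vector-valued error bound is exactly what $s_{it}$ records and that the union over arms is correctly charged to $\log(K/\delta)$, is where the real care is required; the remaining steps are the standard LinUCB calculation.
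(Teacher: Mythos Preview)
Your proposal is correct and follows essentially the same route as the paper's proof: privacy via the hybrid mechanism plus post-processing (with the reward-neighbor relation touching only one arm's stream), optimism via the self-normalized bound of \cite{ridge} for $w_{it}$ together with the Cauchy--Schwarz/$\hat V_{it}^{-1}$-norm estimate $\|\eta_{it}\|_{\hat V_{it}^{-1}}\le \tfrac1{\sqrt\lambda}\|\eta_{it}\|_2$ for the privacy noise, and then the per-arm elliptic-potential lemma to sum the widths. The only cosmetic difference is that you aggregate across arms via $\sum_i\sqrt{N_i^T}\le\sqrt{KT}$, whereas the paper argues the bound is maximized at $N_i^T=T/K$; both lead to the same $\tilde O$ expression.
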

The following corollary follows by setting $\lambda =1$ and setting $\epsilon$ to be as small as possible, without it becoming an asymptotically dominant term in the regret bound. We then apply Theorem~\ref{contextbias} to convert the privacy guarantee into a bias guarantee.
\begin{corollary}
Setting $\lambda = 1$ and  $\epsilon = O(\sqrt{\frac{K}{T}})$, Algorithm~\ref{linpriv} has regret:
$$R(T) = \tilde{O}(d\sqrt{TK})$$
with probability $1-\delta$, and for each arm $i$ satisfies
$$\bias(i,T) \leq e^{\epsilon}-1 = O\left(\sqrt{\frac{K}{T}}\right)$$
\end{corollary}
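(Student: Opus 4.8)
The plan is to treat this purely as a quantitative specialization of the two preceding theorems: Theorem~\ref{regretlinear} supplies the regret, and Theorem~\ref{contextbias} supplies the bias, so the whole argument reduces to plugging in $\lambda = 1$ together with a suitable $\epsilon$ and simplifying. First I would substitute $\lambda = 1$ into the regret bound of Theorem~\ref{regretlinear}, so that the three summands become, up to the polylogarithmic factors hidden in $\tilde{O}$, the terms $d\sqrt{TK}$, $\sqrt{TKd}$, and $\frac{Kd}{\epsilon}$. Since $\sqrt{TKd} = \sqrt{d}\cdot\sqrt{TK} \le d\sqrt{TK}$ whenever $d \ge 1$, the second term is absorbed into the first, leaving a leading term of order $\tilde{O}(d\sqrt{TK})$ and a single privacy-dependent term proportional to $\frac{Kd}{\epsilon}$.

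The next step is to choose $\epsilon$ as small as possible subject to the privacy-dependent term not exceeding the leading term. Requiring $\frac{Kd}{\epsilon} \lesssim d\sqrt{TK}$ gives $\frac{1}{\epsilon} \lesssim \sqrt{T/K}$, i.e.\ $\epsilon \gtrsim \sqrt{K/T}$; taking $\epsilon = \Theta(\sqrt{K/T})$ then makes the third term equal to $\tilde{O}(\sqrt{KT}\cdot d) = \tilde{O}(d\sqrt{TK})$, matching the first term. Summing two terms of the same order preserves the order, so $R(T) = \tilde{O}(d\sqrt{TK})$ with probability $1-\delta$, as claimed.

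For the bias guarantee I would invoke Theorem~\ref{contextbias} directly: since Algorithm~\ref{linpriv} is $\epsilon$-reward differentially private (the privacy claim of Theorem~\ref{regretlinear}), we have $\bias(i,T) \le e^\epsilon - 1$ for every arm $i$. Plugging in $\epsilon = O(\sqrt{K/T})$ and using the first-order expansion $e^x - 1 = x + O(x^2) = O(x)$ as $x \to 0$ yields $\bias(i,T) = O(\sqrt{K/T})$.

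The computation is routine; the only place demanding care is the bookkeeping in the balancing step. Because the privacy-dependent term carries its own polylogarithmic baggage ($\log^{1.5}(T/K)$, $\log(K/\delta)$, and $\log(1 + T/(Kd))$) beyond the clean $\frac{Kd}{\epsilon}$ factor, I would confirm that these factors are genuinely sub-polynomial and hence swept into $\tilde{O}$, so that the choice $\epsilon = \Theta(\sqrt{K/T})$ really does keep the third term at the same $\tilde{O}$ order as $d\sqrt{TK}$ rather than dominating it. Once that is verified, no further work is needed.
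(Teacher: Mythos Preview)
Your proposal is correct and matches the paper's own reasoning exactly: the paper states that the corollary ``follows by setting $\lambda = 1$ and setting $\epsilon$ to be as small as possible, without it becoming an asymptotically dominant term in the regret bound,'' then applying Theorem~\ref{contextbias} for the bias. Your write-up merely spells out the balancing computation that the paper leaves implicit, and does so correctly.
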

\begin{remark}
Readers familiar with the linear contextual bandit literature will remark that the optimal non-private regret bound in the realizable setting scales like $O(\sqrt{Td \log K})$ \cite{CLRS11}, as opposed to $O(d\sqrt{TK})$ above. This is an artifact of the fact that for ease of presentation we have analyzed a simpler LinUCB variant using techniques from \cite{ridge}, rather than the more complicated SupLinUCB algorithm of \cite{CLRS11}. It is not a consequence of using the binary mechanism to guarantee privacy -- it is likely the same technique would give a private variant of SupLinUCB with a tighter regret bound than the one given above. 
\end{remark}

\section{Max Information \& Arbitrary Hypothesis Tests}\label{mutual}
Up through this point, we have focused our attention on showing how the private collection of data mitigates the effect that adaptivity has on \emph{bias}, in both the stochastic and contextual bandit problems. In this section, we draw upon more powerful results from the adaptive data analysis literature to go substantially beyond bias: to correct the $p$-values of hypothesis tests applied to adaptively gathered data. These $p$-value corrections follow from the connection between differential privacy and a quantity called \textit{max information}, which controls the extent to which the dependence of  selected test on the dataset can distort the statistical validity of the test \citep{maxinformation1, maxinformation2}. We briefly define max information, state the connection to differential privacy, and illustrate how max information bounds can be used to perform adaptive analyses in the private data gathering framework.
\begin{definition}[Max-Information \cite{maxinformation1}.]
Let $X, Z$ be jointly distributed random variables over domain $(\mathcal{X}, \mathcal{Z})$. Let $X \otimes Z$ denote the random variable that draws independent copies of $X, Z$ according to their marginal distributions. The max-information between $X, Z$, denoted $I_{\infty}(X,Z)$, is defined:
$$
I_{\infty}(X,Z)= \log \sup_{\mathcal{O} \subset (\mathcal{X} \times \mathcal{Z})}\frac{\Pr{(X,Z) \in \mathcal{O}}}{\Pr{X \otimes Z \in \mathcal{O}}}
$$
Similarly, we define the $\beta$-approximate max information
$$
I_{\beta}(X,Z)= \log \sup_{\mathcal{O} \subset (\mathcal{X} \times \mathcal{Z}),\; \Pr{(X,Z) \in \mathcal{O}}> \beta}\frac{\Pr{(X,Z) \in \mathcal{O}}-\beta}{\Pr{X \otimes Z \in \mathcal{O}}}
$$
\end{definition}

Following \cite{maxinformation2}, define a test statistic $t: \D \to \mathbb{R}$, where $\D$ is the space of all datasets. For $D \in \D$, given an output $a = t(D)$, the $p$-value associated with the test $t$ on dataset $D$ is $p(a) = \Prob{D \sim \mathbb{P}_0}{t(D) \geq a}$, where $P_0$ is the null hypothesis distribution. Consider an algorithm $\mathcal{A}$, mapping a dataset to a test statistic.

\begin{definition}[Valid $p$-value Correction Function \cite{maxinformation2}.]
A function $\gamma:[0,1] \to [0,1]$ is a valid $p$-value correction function for $\mathcal{A}$ if the procedure:
\begin{enumerate}
\item Select a test statistic $t = \mathcal{A}(D)$
\item Reject the null hypothesis if $p(t(D)) \leq \gamma(\alpha)$
\end{enumerate}
has probability at most $\alpha$ of rejection, when $D \sim P_0$.
\end{definition}

Then the following theorem gives a valid $p$-value correction function when $(D, A(D))$ have bounded $\beta$-approximate max information.
\begin{theorem}[\cite{maxinformation2}.]\label{thm:maxinf}
Let $\mathcal{A}$ be a data-dependent algorithm for selecting a test statistics such that $I_{\beta}(X, \mathcal{A}(X)) \leq k$. Then the following function $\gamma$ is a valid $p$-value correction function for $\mathcal{A}$:
$$
\gamma(\alpha) = \max(\frac{\alpha-\beta}{2^{k}},0)
$$

\end{theorem}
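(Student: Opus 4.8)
The plan is to recast the two-step testing procedure as a single event in the joint space of the dataset and the selected statistic, and then use the max-information hypothesis to transfer a classical $p$-value fact from the independent (product) world to the correlated (adaptive) world. Take the dataset $X = D$ drawn from the null $P_0$ and the selected statistic $Z = \mathcal{A}(X)$, and define the rejection event
$$\mathcal{O} = \left\{ (D,t) \in \mathcal{X}\times\mathcal{Z} : p(t(D)) \leq \gamma(\alpha) \right\}.$$
Since the procedure rejects exactly when $p(\mathcal{A}(D)(D)) \leq \gamma(\alpha)$, the probability of false rejection under the null is precisely $\Pr{(X,Z)\in\mathcal{O}}$, so the entire theorem reduces to showing $\Pr{(X,Z)\in\mathcal{O}} \leq \alpha$.

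First I would bound the rejection probability under the product distribution $X\otimes Z$, where a statistic $t$ is drawn from the marginal of $\mathcal{A}(X)$ and an independent fresh dataset $D\sim P_0$ is drawn. The key classical fact is the super-uniformity of $p$-values: for any \emph{fixed} statistic $t$, writing $F(a) = \Pr{t(D)\geq a}$ for the null survival function, the event $\{p(t(D))\leq s\}$ equals $\{F(t(D))\leq s\}$, which because $F$ is non-increasing is the tail event $\{t(D)\geq a_s\}$ with $a_s = \inf\{a : F(a)\leq s\}$, and this has null probability $F(a_s)\leq s$. Applying this with $s=\gamma(\alpha)$ for each fixed $t$ and averaging over $t$ drawn from the marginal gives
$$\Pr{X\otimes Z \in \mathcal{O}} \leq \gamma(\alpha).$$

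Second, I would invoke the $\beta$-approximate max-information bound $I_\beta(X,Z)\leq k$ on the event $\mathcal{O}$. Reading the definition of $I_\beta$ as the uniform inequality $\Pr{(X,Z)\in\mathcal{O}} \leq 2^{k}\,\Pr{X\otimes Z \in \mathcal{O}} + \beta$ (trivially true when $\Pr{(X,Z)\in\mathcal{O}}\leq\beta$, and following from the definition otherwise), and combining with the previous step, yields
$$\Pr{(X,Z)\in\mathcal{O}} \leq 2^{k}\gamma(\alpha) + \beta.$$
In the relevant regime $\alpha \geq \beta$ we have $\gamma(\alpha) = (\alpha-\beta)/2^{k}$, so the right-hand side is exactly $(\alpha-\beta)+\beta = \alpha$, which is the claim; the $\max(\cdot,0)$ merely keeps $\gamma$ a well-defined correction function when $\alpha<\beta$, a range in which the guarantee is vacuous.

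I expect the only genuinely delicate point to be the super-uniformity step, since it must hold for an arbitrary and possibly atomic test statistic, so the argument via the generalized inverse $a_s$ of the survival function $F$ is where care is required; once that is in hand, the transfer through $I_\beta$ is a direct substitution into its definition, and the choice of $\gamma$ is reverse-engineered so that the two additive error terms collapse to exactly $\alpha$.
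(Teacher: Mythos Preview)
The paper does not prove this theorem; it is quoted from \cite{maxinformation2} without proof, so there is no in-paper argument to compare against. Your proposal is correct and is essentially the standard proof: encode rejection as an event $\mathcal{O}$ in the joint space, bound $\Pr{X\otimes Z\in\mathcal{O}}\leq\gamma(\alpha)$ via super-uniformity of $p$-values under the null for each fixed $t$, then unwind the definition of $I_\beta$ to obtain $\Pr{(X,Z)\in\mathcal{O}}\leq 2^k\gamma(\alpha)+\beta=\alpha$.

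One minor tightening in your super-uniformity step: with $a_s=\inf\{a:F(a)\leq s\}$, you asserted $F(a_s)\leq s$, but $F(a)=\Pr{t(D)\geq a}$ is left-continuous rather than right-continuous, so the infimum need not be attained. The cleanest fix is to split on whether $\{a:F(a)\leq s\}$ contains its left endpoint: if it does, $\Pr{t(D)\geq a_s}=F(a_s)\leq s$; if not, $\Pr{F(t(D))\leq s}=\Pr{t(D)>a_s}=\lim_{a\downarrow a_s}\Pr{t(D)\geq a}\leq s$. Either way the bound holds, and the rest of your argument goes through unchanged.
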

Finally, we can connect max information to differential privacy, which allows us to leverage private algorithms to perform arbitrary valid statistical tests.
\begin{theorem}[Theorem 20 from \cite{maxinformation1}.]\label{thm:maxinf2}
Let $\mathcal{A}$ be an $\epsilon$-differentially private algorithm, let $P$ be an arbitrary product distribution over datasets of size $n$, and let $D \sim P$. Then for every $\beta > 0$:
$$I_{\beta}(D, \A(D)) \leq \log(e)(\epsilon^2n/2 + \epsilon\sqrt{n\log(2/\beta)/2})$$
\end{theorem}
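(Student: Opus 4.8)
The plan is to reduce the stated max-information bound to a high-probability tail bound on the \emph{information density}, and then to control that tail using two consequences of pure differential privacy. Writing $A = \A(D)$, define for a realization $(x,a)$ the information density (in bits) $\iota(x,a) = \log_2\frac{\Pr{\A(x)=a}}{\Pr{A=a}}$, where the denominator is the marginal probability of the output under a fresh draw $D\sim P$, so that $\Pr{A=a} = \bE_{x'\sim P}\Pr{\A(x')=a}$. A standard lemma characterizing $\beta$-approximate max-information in terms of information density shows it suffices to prove $\Pr{\iota(D,A) > k} \le \beta$ with $k$ equal to the right-hand side of the theorem: any event $\mathcal{O}$ with $\Pr{(D,A)\in\mathcal{O}} > \beta$ then automatically satisfies the inequality defining $I_\beta(D,A) \le k$.

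The single structural fact I would extract from the hypotheses is that, because $\A$ is $\eps$-differentially private and neighboring datasets differ in one coordinate, for every fixed output $a$ the map $x \mapsto \log\Pr{\A(x)=a}$ has bounded differences: changing one coordinate of $x$ moves it by at most $\eps$ (in nats). This is the only place privacy enters, and the product structure of $P$ is what lets me treat the $n$ coordinates as independent sources of fluctuation in the concentration step.

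With these in hand I would organize the tail analysis around the Doob martingale $M_j = \bE[\iota(D,A)\mid A, D_{1:j}]$, which runs from $M_0 = \bE[\iota(D,A)\mid A]$ to $M_n = \iota(D,A)$ under the joint law. The bounded-difference property makes each increment $|M_j - M_{j-1}| \le \eps$ nats, so Azuma--Hoeffding gives $\Pr{\iota - M_0 > t \mid A} \le \exp(-t^2/(2n\eps^2))$; taking $t = \eps\sqrt{(n/2)\log(2/\beta)}$ nats drives the conditional (hence unconditional) failure probability to $\beta$. It then remains to bound the starting point $M_0$ uniformly in $a$. Writing $h(x) = \log\Pr{\A(x)=a}$ and $\psi(\lambda) = \log\bE_{x'\sim P}[e^{\lambda h(x')}]$, one checks that $M_0\ln 2 = \psi'(1) - \psi(1)$, and since $h$ is a bounded-difference function its tilted variances $\psi''(\lambda)$ are $O(n\eps^2)$, giving $M_0 = O(\eps^2 n)$ nats by integrating $\psi''$. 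Combining the two bounds and converting nats to bits yields $\iota \le \log_2 e\,(\eps^2 n/2 + \eps\sqrt{n\log(2/\beta)/2})$ with probability at least $1-\beta$, which is the claim.

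The main obstacle is establishing the two facts that make the martingale argument go through under the joint, \emph{non-product} posterior of $D$ given $A=a$: that each Doob increment is bounded by $\eps$, and that $M_0$ is bounded uniformly over outputs $a$. Both rest on the coordinate-wise $\eps$-DP guarantee combined with the product structure of $P$ — conditioning on one more coordinate $D_j$ can shift $\log\Pr{A=a\mid D_{1:j}}$ by at most $\eps$, and the posterior over $D$ given $A=a$ is an exponential tilt of the product $P$, so bounded-difference control transfers through the tilting. The finicky part is the constant bookkeeping: tracking the sub-Gaussian variance proxy through the $\psi''(\lambda)$ estimate and the natural-log versus base-$2$ conversions so that the constants land exactly on $\eps^2 n/2$ and $\sqrt{n\log(2/\beta)/2}$ rather than off by small numerical factors.
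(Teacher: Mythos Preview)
The paper does not prove this statement: it is quoted verbatim as Theorem~20 of \cite{maxinformation1} and invoked as a black box, together with Theorem~\ref{thm:maxinf} and the reduction of Section~\ref{ada}, to obtain the $p$-value correction in Corollary~\ref{pvalcor}. There is therefore no proof in this paper to compare your proposal against.

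That said, your outline is broadly aligned with how the result is established in the cited source: the decisive structural input is exactly the one you isolate, namely that $\epsilon$-differential privacy makes $x \mapsto \ln\Pr{\A(x)=a}$ a bounded-difference function (per-coordinate oscillation at most $\epsilon$) over the product distribution $P$. Where your sketch is shakiest is precisely where you flag it: running the Doob martingale \emph{under the posterior} of $D$ given $A=a$. That posterior is an exponential tilt of $P$ and is not a product measure, so the standard ``bounded differences $\Rightarrow$ bounded martingale increments'' implication does not apply directly --- changing $D_j$ also perturbs the conditional law of $D_{j+1:n}$ under the posterior, and the resulting increment is not obviously bounded by $\epsilon$. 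The argument in \cite{maxinformation1} sidesteps this by applying McDiarmid to $g_a(D)=\ln\Pr{\A(D)=a}-\ln\Pr{A=a}$ \emph{under $P$ itself}, where the coordinates are genuinely independent, using Jensen to obtain $\bE_P[g_a]\le 0$, and then transferring to the joint law via the change of measure $e^{g_A(D)}$: one bounds $\bE\left[e^{g_A(D)}\mathbbm{1}\{g_A(D)>k'\}\right]$ under the product of the marginals of $D$ and $A$ using the sub-Gaussian moment generating function of $g_a$. Your $M_0$ computation via the cumulant $\psi$ is essentially this change-of-measure step in another guise; reorganizing the argument around McDiarmid-under-$P$ rather than Azuma-under-the-posterior is what resolves the obstacle you identify.
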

\cite{maxinformation2} extend this theorem to algorithm satisfying $(\epsilon,\delta)$-differential privacy.

\begin{remark}
We note that a hypothesis of this theorem is that the data is drawn from a product distribution. In the contextual bandit setting, this corresponds to rows in the bandit tableau being drawn from a product distribution. This will be the case if contexts are drawn from a distribution at each round, and then rewards are generated as some fixed stochastic function of the contexts. Note that contexts (and even rewards) can be correlated with one another within a round, so long as they are selected independently across rounds. In contrast, the regret bound we prove allows the contexts to be selected by an adversary, but adversarially selected contexts would violate the independence assumption needed for Theorem \ref{thm:maxinf2}.
\end{remark}

We now formalize the process of running a hypothesis test against an adaptively collected dataset. A bandit algorithm $\A$ generates a history $\Lambda_T \in \mathcal{H}^{T}$. Let the reward portion of the gathered dataset be denoted by $D_\A$.
 We define an \textit{adaptive test statistic selector} as follows.
 \begin{definition}
 Fix the reward portion of a bandit tableau $D$ and bandit algorithm $\A$. An adaptive test statistic selector is a function $s$ from action histories to test statistics such that
 $s(\Lambda_T^{\A})$ is a real-valued function of the adaptively gathered dataset ${D_{\A}}$. \end{definition}

 Importantly, the selection of the test statistic $s(\Lambda_T^{\A})$ can depend on the sequence of arms pulled by $\A$ (and in the contextual setting, on all contexts observed), but not otherwise on the reward portion of the tableau $D$. For example, $t_\A = s(\Lambda_T^{\A})$ could be the $t$-statistic corresponding to the null hypothesis that the arm $i^*$ which was pulled the greatest number of times has mean $\mu$:
 \[t_\A(D_{\A}) = \frac{\sum_{t=1}^{N_{i^*}^{T}}y_{i^*t}-\mu}{\sqrt{N_{i^*}^{T}}}\]
By virtue of Theorems~\ref{thm:maxinf} and \ref{thm:maxinf2}, and our view of  adaptive data gathering as adaptively selected queries, we get the following corollary:
\begin{corollary}
\label{pvalcor}
Let $\A$ be an $\epsilon$ reward differentially private bandit algorithm, and let $s$ be an adaptive test statistic selector. Fix $\beta > 0$, and let $\gamma(\alpha) = \frac{\alpha}{2^{\log(e)(\epsilon^2T/2 + \epsilon\sqrt{T\log(2/\beta)/2})}},$ for $\alpha \in [0,1]$. Then for any adaptively selected statistic $t_\A = s(\Lambda_T^{\A})$, and any product distribution $P$ corresponding to the null hypothesis for $t_{\A}$
$$
\Prob{D \sim P, \A}{p(t_\A(D)) \leq \gamma(\alpha)} \leq \alpha
$$
\end{corollary}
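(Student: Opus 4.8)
The plan is to derive Corollary~\ref{pvalcor} by composing three results already in place: the gathering/analysis equivalence of Claim~\ref{duh}, the privacy-to-max-information bound of Theorem~\ref{thm:maxinf2}, and the $p$-value correction of Theorem~\ref{thm:maxinf}. First I would invoke Claim~\ref{duh} to pass from the online view of the bandit interaction to the fixed-dataset view: the reward tableau $D$ (with $T$ rows) is drawn in advance from the product distribution $P$, and the action history $\Lambda_T^{\A}$ is regarded as the output of an algorithm run on the fixed dataset $D$ (the \textbf{InteractQuery} process). In the contextual case I would condition on the contexts, which are fixed and, since $P$ is a product distribution over rows, leave the rows of $D$ independent --- exactly the regime flagged in the Remark following Theorem~\ref{thm:maxinf2}.

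Next I would observe that the composite map $D \mapsto t_\A = s(\Lambda_T^{\A})$ is $\epsilon$-differentially private in $D$: by hypothesis $\A$ is reward differentially private, so $\Lambda_T^{\A}$ is an $\epsilon$-DP function of the reward tableau (the reward-neighbor relation is precisely the ``change one row'' relation used in Theorem~\ref{thm:maxinf2}), and since $s$ together with the already-fixed contexts merely post-processes $\Lambda_T^{\A}$, closure of differential privacy under post-processing transfers $\epsilon$-DP to the selected statistic $t_\A$. Feeding this into Theorem~\ref{thm:maxinf2} with $n = T$ gives $I_\beta(D, t_\A) \leq k := \log(e)(\epsilon^2 T/2 + \epsilon\sqrt{T\log(2/\beta)/2})$, and then Theorem~\ref{thm:maxinf}, applied to $s$ viewed as a data-dependent test-statistic selector, produces a valid correction function and hence the claimed bound on the rejection probability when $D \sim P_0$.

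The step I expect to require the most care is matching the exact form of $\gamma$. Theorem~\ref{thm:maxinf} literally yields $\gamma(\alpha) = \max\bigl(\tfrac{\alpha-\beta}{2^{k}},0\bigr)$, whereas the corollary writes $\gamma(\alpha) = \tfrac{\alpha}{2^{k}}$; carrying the additive $\beta$ through honestly gives a rejection probability of at most $\alpha+\beta$ rather than $\alpha$. I would therefore either record the conclusion with this $\alpha+\beta$ slack, or adopt the sharper correction $\max\bigl(\tfrac{\alpha-\beta}{2^k},0\bigr)$, noting that $\beta$ is a free parameter that can be driven to $0$ while inflating $k$ only through the mild $\log(2/\beta)$ term. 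A secondary point worth verifying is that the product-distribution hypothesis is genuinely needed: adversarially chosen contexts, although permitted by the regret analysis of Theorem~\ref{regretlinear}, would destroy the cross-round independence required by Theorem~\ref{thm:maxinf2}.
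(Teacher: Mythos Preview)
Your proposal is correct and follows precisely the route the paper indicates: the corollary is stated as an immediate consequence of Claim~\ref{duh}, Theorem~\ref{thm:maxinf2} (with $n=T$), and Theorem~\ref{thm:maxinf}, together with post-processing to transfer $\epsilon$-DP from $\Lambda_T^{\A}$ to $t_\A$. Your observation about the missing $\beta$ in the stated $\gamma(\alpha)$ is accurate---the paper silently drops the additive $\beta$ from Theorem~\ref{thm:maxinf}, and your suggested fixes (either carry an $\alpha+\beta$ bound or use $\max((\alpha-\beta)/2^k,0)$ and send $\beta\to 0$) are the right way to make the statement precise.
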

 If we set $\epsilon = O(1/\sqrt{T})$ in Corollary~\ref{pvalcor}, then  $\gamma(\alpha) = O(\alpha)$-- i.e. a valid $p$-value correction that only scales $\alpha$ by a constant. For example, in the simple stochastic setting, we can recall corollary~\ref{simplecor} to obtain:
\begin{corollary}
\label{regretpvalue}
Setting $\epsilon = O(\frac{\ln^{1.5} T}{\sqrt{k T}})$ there exists a simple stochastic bandit algorithm that guarantees expected regret bounded by $ O(\sqrt{kT\log T})$, such that for \textit{any} adaptive test statistic $t$ evaluated on the collected data, there exists a valid $p$-value correction function $\gamma(\alpha) = O(\alpha)$.
\end{corollary}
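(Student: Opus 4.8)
The plan is to combine the two results already in hand: the joint regret-and-privacy guarantee of Corollary~\ref{simplecor} and the $p$-value correction of Corollary~\ref{pvalcor}. Corollary~\ref{simplecor} exhibits a concrete algorithm---the private UCB variant of \cite{Tossou16} instantiated at $\epsilon = O(\frac{\ln^{1.5}T}{\sqrt{kT}})$, which is $\epsilon$-reward differentially private by construction and attains expected regret $O(\sqrt{kT\log T})$. This discharges the regret half of the statement immediately, so the entire task reduces to feeding this \emph{same} algorithm, at this \emph{same} $\epsilon$, into the $p$-value machinery and reading off the resulting correction function.

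First I would observe that since the algorithm is $\epsilon$-reward differentially private, the hypotheses of Corollary~\ref{pvalcor} hold verbatim: for any adaptive test statistic selector $s$ and any product null distribution, the function $\gamma(\alpha) = \alpha / 2^{\log(e)(\epsilon^2 T/2 + \epsilon\sqrt{T\log(2/\beta)/2})}$ is a valid correction. It then remains only to show that at $\epsilon = O(\frac{\ln^{1.5}T}{\sqrt{kT}})$ the exponent in the denominator is $O(1)$, for then the denominator $2^{(\cdots)}$ is a constant and $\gamma(\alpha)$ is a constant multiple of $\alpha$, i.e. $\gamma(\alpha) = O(\alpha)$ exactly as claimed.

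The crux is therefore an elementary substitution into the exponent. Writing $\epsilon = c\,\frac{\ln^{1.5}T}{\sqrt{kT}}$, the two contributions become $\epsilon^2 T/2 = O(\ln^3 T / k)$ and $\epsilon\sqrt{T\log(2/\beta)/2} = O(\frac{\ln^{1.5}T}{\sqrt{k}}\sqrt{\log(2/\beta)})$. Both are governed by the single quantity $\ln^{1.5}T/\sqrt{k}$, so with $\beta$ fixed at a constant the exponent is $O(1)$ precisely when $\epsilon = O(1/\sqrt{T})$, equivalently when $k = \Omega(\ln^3 T)$. This is the regime already flagged in the discussion preceding Corollary~\ref{pvalcor}, where it is noted that $\epsilon = O(1/\sqrt{T})$ yields $\gamma(\alpha) = O(\alpha)$.

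I expect this last scaling check to be the only genuine obstacle, and it is bookkeeping rather than a new idea: the privacy level that Corollary~\ref{simplecor} must use to \emph{preserve} the $O(\sqrt{kT\log T})$ regret bound is itself already $O(1/\sqrt{T})$ once $k$ exceeds a polylogarithmic threshold, so the max-information bound of Theorem~\ref{thm:maxinf2} stays constant in exactly the same parameter range. I would therefore record the corollary in that regime, emphasize that a \emph{single} algorithm simultaneously witnesses the regret bound (via Theorem~\ref{regretTossou}) and the constant max-information bound (via Theorem~\ref{thm:maxinf2} applied to its $\epsilon$-reward privacy), and conclude that its adaptively gathered output admits the stated $O(\alpha)$ correction for \emph{every} adaptively chosen test statistic.
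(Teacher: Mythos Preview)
Your approach is exactly the paper's: the corollary is stated immediately after the sentence ``If we set $\epsilon = O(1/\sqrt{T})$ in Corollary~\ref{pvalcor}, then $\gamma(\alpha) = O(\alpha)$\ldots we can recall Corollary~\ref{simplecor} to obtain,'' and no further argument is given. Combining Corollary~\ref{simplecor} (for the regret and the privacy level) with Corollary~\ref{pvalcor} (for the correction function) is all that is intended.

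You are in fact more careful than the paper on one point. You correctly compute that at $\epsilon = c\,\ln^{1.5}T/\sqrt{kT}$ the max-information exponent is $O(\ln^{3}T/k)$, so a \emph{constant}-factor correction $\gamma(\alpha)=\Theta(\alpha)$ requires $k=\Omega(\ln^{3}T)$; equivalently, the stated $\epsilon$ is only $O(1/\sqrt{T})$ in that regime. The paper silently conflates $\epsilon=O(\ln^{1.5}T/\sqrt{kT})$ with $\epsilon=O(1/\sqrt{T})$ and does not record this polylogarithmic caveat. Your decision to state the result in that regime is the right fix, and otherwise your write-up matches the paper's intended one-line derivation.
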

Of course, our theorems allow us to smoothly trade off the severity of the $p$-value correction with the regret bound.


\section{Experiments}
We first validate our theoretical bounds on bias in the simple stochastic bandit setting. As expected the standard UCB algorithm underestimates the mean at each arm, while the private UCB algorithm of \cite{Mishra} obtains very low bias. While using the $\epsilon$ suggested by the theory in Corollary~\ref{regretpvalue} effectively reduces bias and achieves near optimal asymptotic regret, the resulting private algorithm only achieves non-trivial regret
for large $T$ due to large constants and logarithmic factors in our bounds. This motivates a heuristic choice of $\epsilon$ that provides no theoretical guarantees on bias reduction, but leads to regret that is
comparable to the non-private UCB algorithm. We find empirically that even with this large choice of $\epsilon$ we achieve an $8$ fold reduction in bias relative to UCB. This is consistent with the observation that our guarantees hold in the worst-case, and suggests that there is room for improvement in our theoretical bounds --- both improving constants in the worst-case bounds on bias and on regret, and for proving instance specific bounds. Finally, we show that in the linear contextual bandit setting collecting data adaptively with a linear UCB algorithm and then conducting $t$-tests for regression coefficients yields incorrect inference (absent a $p$-value correction). These findings confirm the necessity of our methods when drawing conclusions from adaptively gathered data.

\subsection{Stochastic Multi-Armed Bandit}

In our first stochastic bandit experiment we set $K = 20$ and $T = 500$. The $K$ arm means are equally spaced between $0$ and $1$ with gap $\Delta = .05$, with $\mu_0 =1$.  We run UCB and $\epsilon$-private UCB for $T$ rounds with $\epsilon = .05$,  and after each run compute the difference between the sample mean at each arm and the true mean. We repeat this process $10,000$ times, averaging to obtain high confidence estimates of the bias at each arm. The average absolute bias over all arms for private UCB was $.00176$, with the bias for every arm being statistically indistinguishable from $0$ (see Figures~\ref{figsub1} for confidence intervals) while the average absolute bias (over arms) for UCB was $.0698$, or over $40$ times higher. The most biased arm had a measured bias of roughly $0.14$, and except for the top $4$ arms, the bias of each arm was statistically significant.  It is worth noting that private UCB achieves bias significantly lower than the $\epsilon = .05$ guaranteed by the theory, indicating that the theoretical bounds on bias obtained from differential privacy are conservative.  Figures~\ref{figsub1}, \ref{figsub2} show the bias at each arm for private UCB vs. UCB, with $95\%$ confidence intervals around the bias at each arm. Not only is the bias for private UCB an order of magnitude smaller on average, it does not exhibit the systemic negative bias evident in Figure~\ref{figsub2}.

Noting that the observed reduction in bias for $\epsilon = .05$ exceeded that guaranteed by the theory, we run a second experiment with $K = 5, T = 100000, \Delta = .05,$ and $\epsilon = 400$, averaging results over $1000$ iterations.
Figure~\ref{regret} shows that private UCB achieves sub-linear regret comparable with UCB. While $\epsilon = 400$ provides no meaningful theoretical guarantee, the average absolute bias at each arm mean obtained by the private algorithm was $.0015$ (statistically indistinguishable from 0 at 95\% confidence for each arm), while the non-private UCB algorithm obtained average bias $.011$, $7.5$ times larger. The bias reduction for the arm with the smallest mean (for which the bias is the worst with the non private algorithm) was by more than a factor of 10.  Figures~\ref{fig2sub1},\ref{fig2sub2} show the bias at each arm for the private and non-private UCB algorithms together with 95\% confidence intervals; again we observe a negative skew in the bias for UCB, consistent with the theory in \cite{gather}.

\begin{figure}
\label{figbias}
\centering
\begin{subfigure}
    \centering
  \includegraphics[scale=.6]{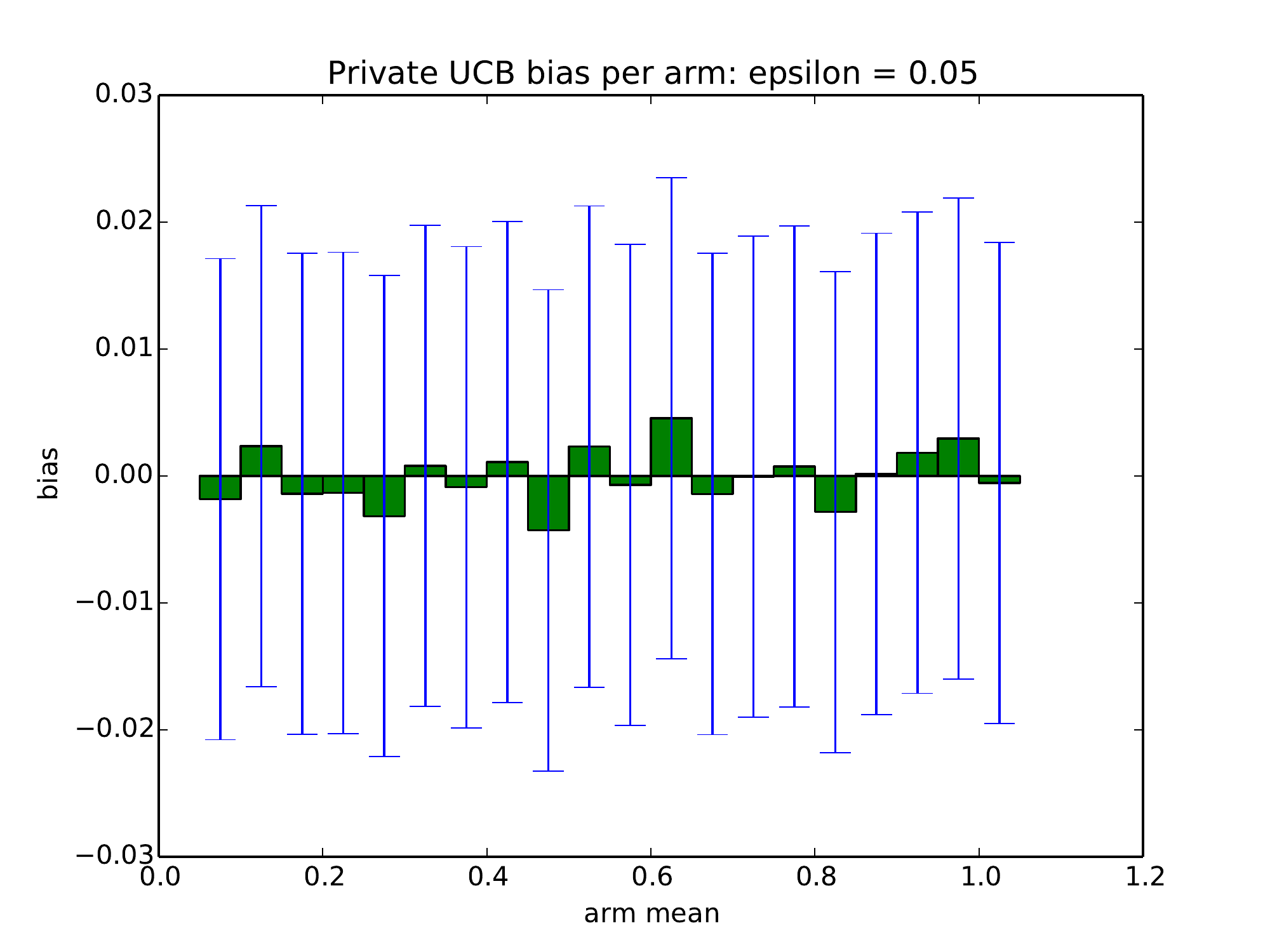}
  \caption{Private UCB Bias per Arm (experiment $1$)}
    \label{figsub1}
\end{subfigure}%
\begin{subfigure}  \centering
  \includegraphics[scale = .6]{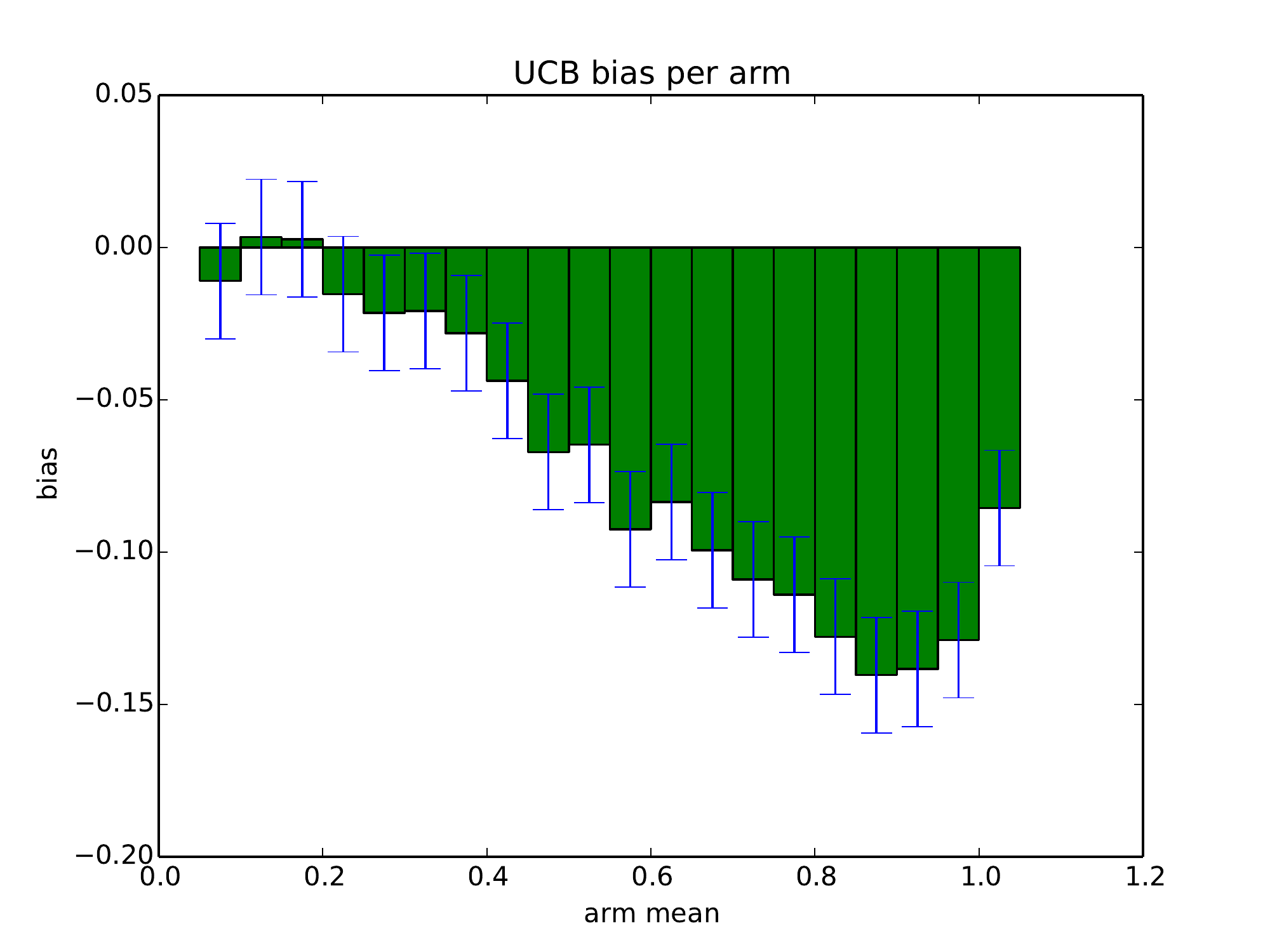}
\caption{UCB Bias per Arm (experiment $1$)}
  \label{figsub2}
\end{subfigure}
\end{figure}

\begin{figure}
\centering  \includegraphics[scale=.6]{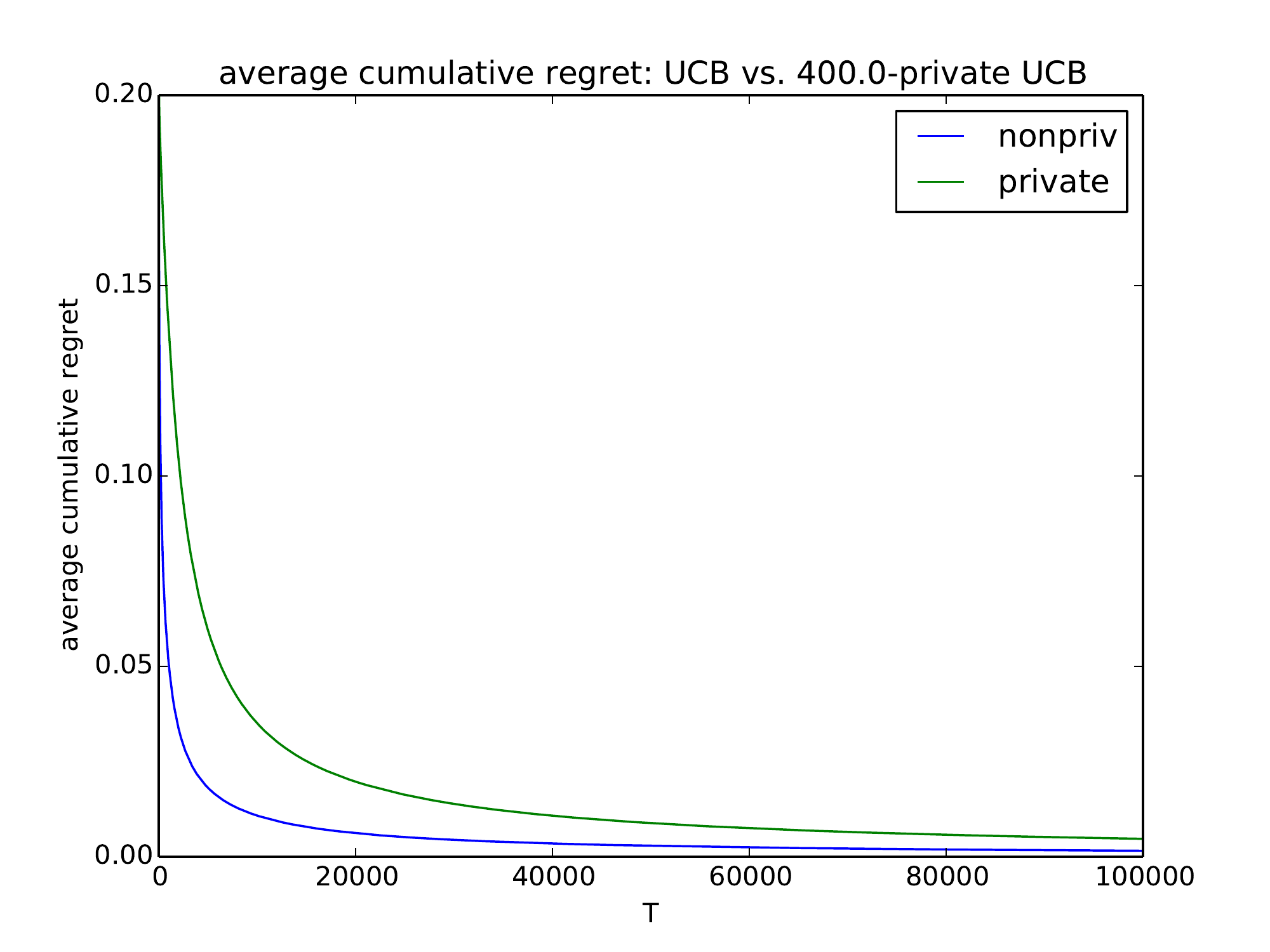}
  \caption{Average Regret: UCB vs. Private UCB}
  \label{regret}
\end{figure}

\begin{figure}
\label{bias2}
\centering
\begin{subfigure}  \centering
  \includegraphics[scale=.6]{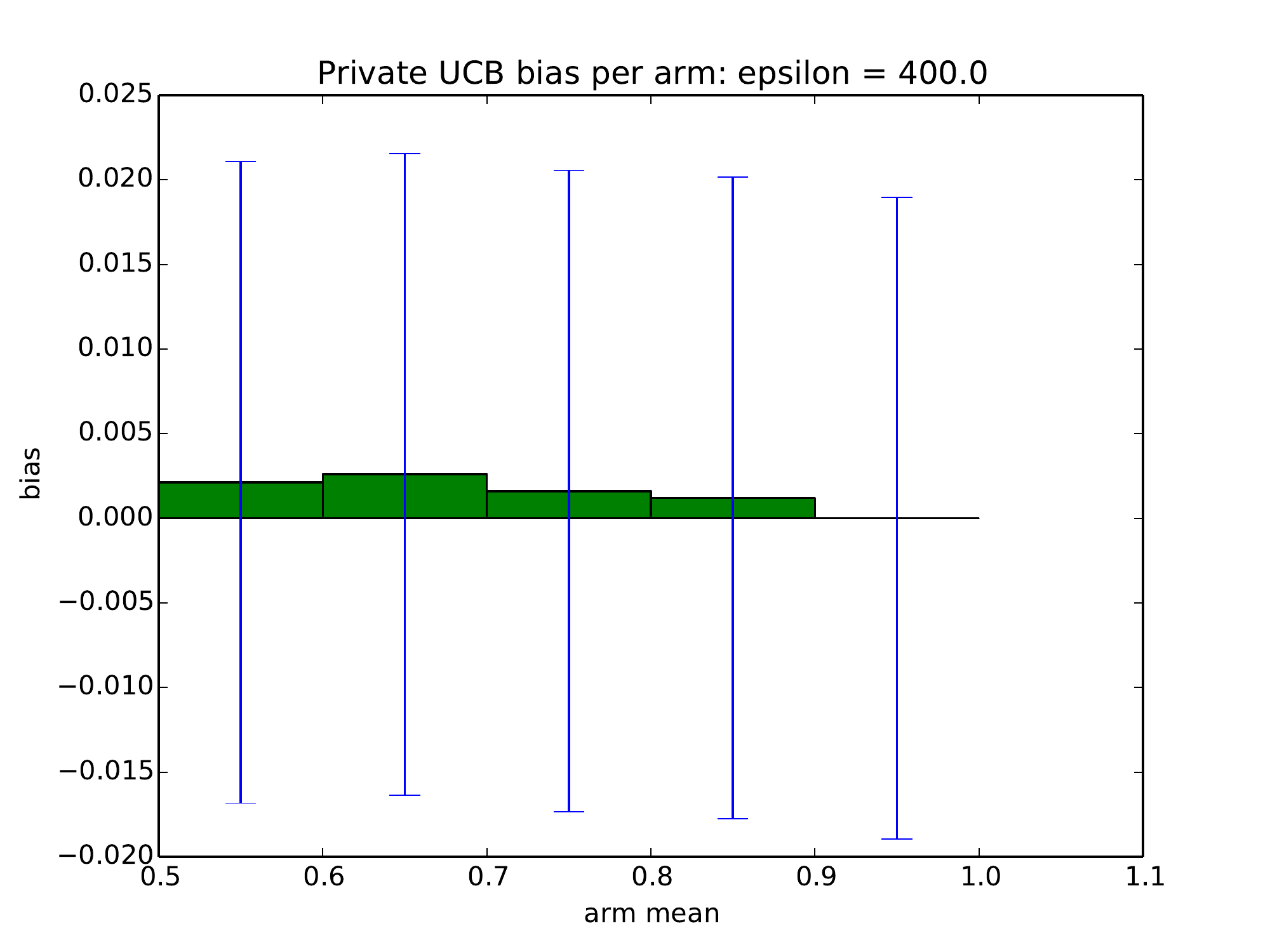}
  \caption{Private UCB Bias per Arm (experiment $2$)}
  \label{fig2sub1}
\end{subfigure}%
\begin{subfigure}  \centering
  \includegraphics[scale = .6]{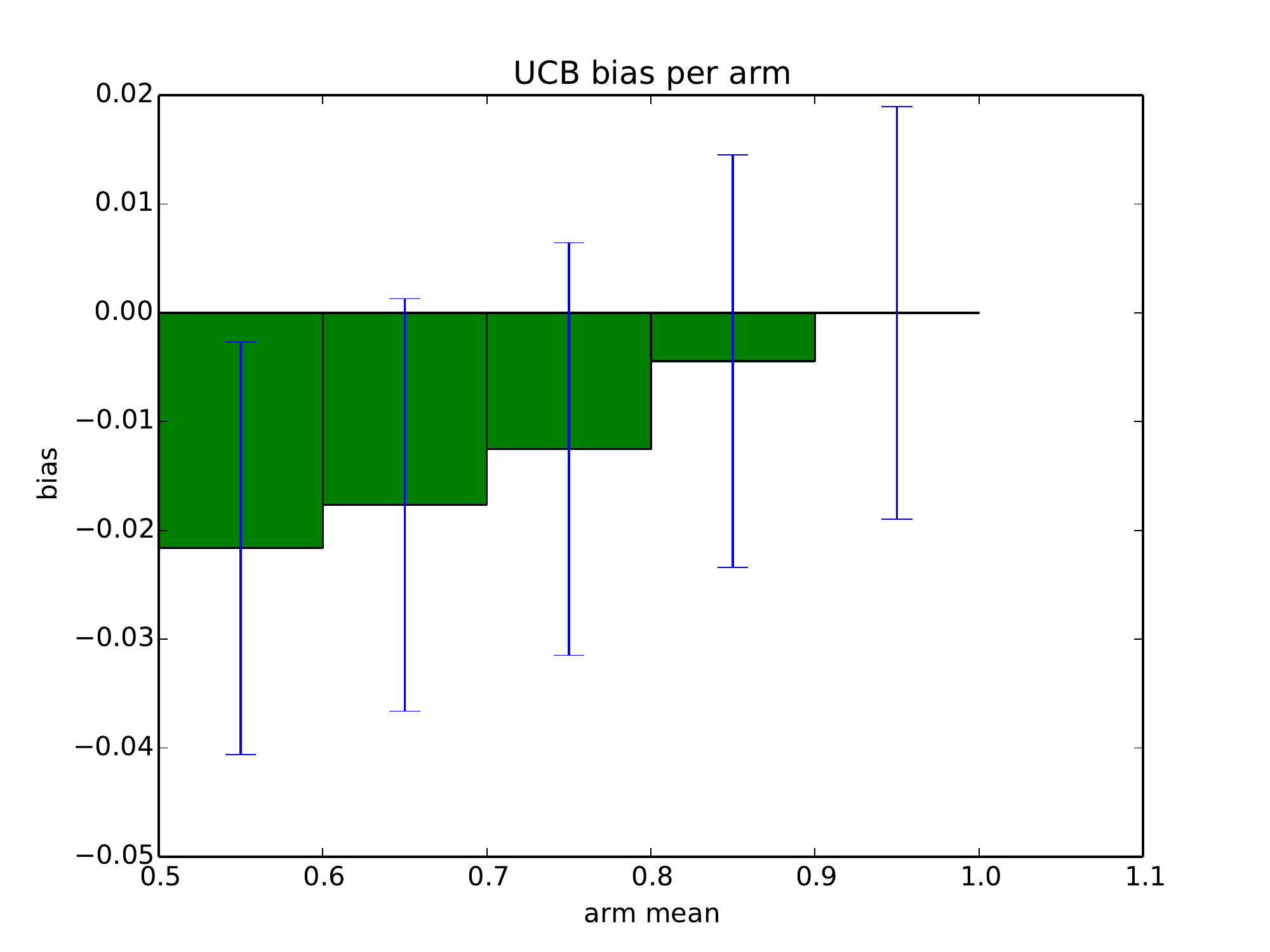}
\caption{UCB Bias per Arm (experiment $2$)}
  \label{fig2sub2}
\end{subfigure}
\end{figure}

\subsection{Linear Contextual Bandits}
 \cite{gather} prove and experimentally investigate the existence of negative bias at each arm in the simple stochastic bandit case. Our second experiment confirms that adaptivity leads to bias in the linear contextual bandit setting in the context of hypothesis testing -- and in particular can lead to false discovery in testing for non-zero regression coefficients. The set up is as follows: for $K = 5$ arms, we observe rewards $\yit \sim \mathcal{N}(\theta_i' \xit, 1)$, where $\theta_i, \xit \in \mathbb{R}^5,  ||\theta_i|| = ||\xit|| = 1$. For each arm $i$, we set $\theta_{i1} =0$. Subject to these constraints, we pick the $\theta$ parameters uniformly at random (once per run), and select the contexts $x$ uniformly at random (at each round). We run a linear UCB algorithm (OFUL \cite{ridge}) for $T = 500$ rounds, and identify the arm $i^*$ that has been selected most frequently. We then conduct a $z$-test for whether the first coordinate of $\theta_{i^*}$ is equal to $0$. By construction the null hypothesis $H_0: \theta_{i^*1} = 0$ of the experiment is true, and absent adaptivity, the $p$-value should be distributed uniformly at random. In particular, for any value of $\alpha$ the probability that the corresponding $p$-value is less than $\alpha$ is exactly $\alpha$. We record the observed $p$-value, and repeat the experiment $1000$ times, displaying the histogram of observed $p$-values in Figure~\ref{pvaluehistogram}. As expected, the adaptivity of the data gathering process leads the $p$-values to exhibit a strong downward skew. The dotted blue line  demarcates $\alpha = .05$. Rather than probability $.05$ of falsely rejecting the null hypothesis at 95\% confidence, we observe that $76\%$ of the observed $p$-values fall below the $.05$ threshold. This shows that a careful $p$-value correction in the style of Section~\ref{ada} is essential even for simple testing of regression coefficients, lest bias lead to false discovery.

 \begin{figure}
\label{fig:test}
\centering
\includegraphics[scale=.6]{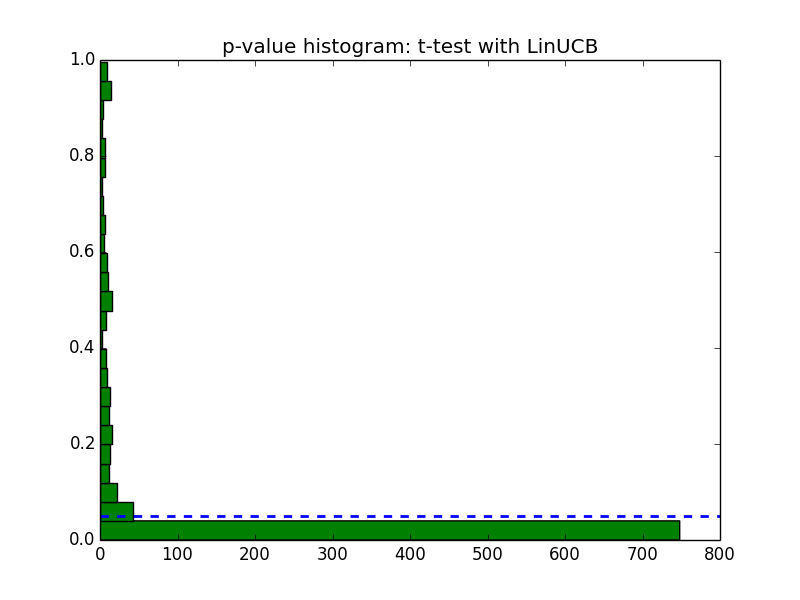}
\caption{Histogram of $p$-values for $z$-test under the null hypothesis. $K = d = 5, T = 500$.}
\label{pvaluehistogram}
\end{figure}
\pagebreak

\bibliographystyle{plainnat}
\bibliography{refs}

\begin{thebibliography}{34}
\providecommand{\natexlab}[1]{#1}
\providecommand{\url}[1]{\texttt{#1}}
\expandafter\ifx\csname urlstyle\endcsname\relax
  \providecommand{\doi}[1]{doi: #1}\else
  \providecommand{\doi}{doi: \begingroup \urlstyle{rm}\Url}\fi

\bibitem[Abbasi-Yadkori et~al.(2011)Abbasi-Yadkori, P\'{a}l, and
  Szepesv\'{a}ri]{ridge}
Yasin Abbasi-Yadkori, D\'{a}vid P\'{a}l, and Csaba Szepesv\'{a}ri.
\newblock Improved algorithms for linear stochastic bandits.
\newblock In \emph{Proceedings of the 24th International Conference on Neural
  Information Processing Systems}, NIPS'11, pages 2312--2320, USA, 2011. Curran
  Associates Inc.
\newblock ISBN 978-1-61839-599-3.
\newblock URL \url{http://dl.acm.org/citation.cfm?id=2986459.2986717}.

\bibitem[Agrawal(1995)]{Agarwal}
Rajeev Agrawal.
\newblock Sample mean based index policies with o(log n) regret for the
  multi-armed bandit problem.
\newblock \emph{Advances in Applied Probability}, 27\penalty0 (4):\penalty0
  1054--1078, 1995.
\newblock ISSN 00018678.
\newblock URL \url{http://www.jstor.org/stable/1427934}.

\bibitem[Audibert and Bubeck(2009)]{banditlb}
Jean-Yves Audibert and S{\'e}bastien Bubeck.
\newblock Minimax policies for adversarial and stochastic bandits.
\newblock In \emph{COLT}, pages 217--226, 2009.

\bibitem[Auer et~al.(2002)Auer, Cesa-Bianchi, and Fischer]{Auer02}
Peter Auer, Nicol\`{o} Cesa-Bianchi, and Paul Fischer.
\newblock Finite-time analysis of the multiarmed bandit problem.
\newblock \emph{Mach. Learn.}, 47\penalty0 (2-3):\penalty0 235--256, May 2002.
\newblock ISSN 0885-6125.
\newblock \doi{10.1023/A:1013689704352}.
\newblock URL \url{https://doi.org/10.1023/A:1013689704352}.

\bibitem[Bassily et~al.(2016)Bassily, Nissim, Smith, Steinke, Stemmer, and
  Ullman]{agstab}
Raef Bassily, Kobbi Nissim, Adam Smith, Thomas Steinke, Uri Stemmer, and
  Jonathan Ullman.
\newblock Algorithmic stability for adaptive data analysis.
\newblock In \emph{Proceedings of the forty-eighth annual ACM symposium on
  Theory of Computing}, pages 1046--1059. ACM, 2016.

\bibitem[Bubeck et~al.(2012)Bubeck, Cesa-Bianchi, et~al.]{bandits}
S{\'e}bastien Bubeck, Nicolo Cesa-Bianchi, et~al.
\newblock Regret analysis of stochastic and nonstochastic multi-armed bandit
  problems.
\newblock \emph{Foundations and Trends{\textregistered} in Machine Learning},
  5\penalty0 (1):\penalty0 1--122, 2012.

\bibitem[Chan et~al.(2011)Chan, Shi, and Song]{continual}
T.-H.~Hubert Chan, Elaine Shi, and Dawn Song.
\newblock Private and continual release of statistics.
\newblock \emph{ACM Trans. Inf. Syst. Secur.}, 14\penalty0 (3):\penalty0
  26:1--26:24, November 2011.
\newblock ISSN 1094-9224.
\newblock \doi{10.1145/2043621.2043626}.
\newblock URL \url{http://doi.acm.org/10.1145/2043621.2043626}.

\bibitem[Chu et~al.(2011)Chu, Li, Reyzin, and Schapire]{CLRS11}
Wei Chu, Lihong Li, Lev Reyzin, and Robert Schapire.
\newblock Contextual bandits with linear payoff functions.
\newblock In \emph{Proceedings of the Fourteenth International Conference on
  Artificial Intelligence and Statistics}, pages 208--214, 2011.

\bibitem[Cummings et~al.(2016)Cummings, Ligett, Nissim, Roth, and Wu]{CLNRW16}
Rachel Cummings, Katrina Ligett, Kobbi Nissim, Aaron Roth, and Zhiwei~Steven
  Wu.
\newblock Adaptive learning with robust generalization guarantees.
\newblock In \emph{Conference on Learning Theory}, pages 772--814, 2016.

\bibitem[Deshpande et~al.(2017)Deshpande, Mackey, Syrgkanis, and Taddy]{DMST17}
Yash Deshpande, Lester Mackey, Vasilis Syrgkanis, and Matt Taddy.
\newblock Accurate inference for adaptive linear models.
\newblock \emph{arXiv preprint arXiv:1712.06695}, 2017.

\bibitem[Dimakopoulou et~al.(2017)Dimakopoulou, Athey, and Imbens]{DAI17}
Maria Dimakopoulou, Susan Athey, and Guido Imbens.
\newblock Estimation considerations in contextual bandits.
\newblock \emph{arXiv preprint arXiv:1711.07077}, 2017.

\bibitem[Dubhashi and Panconesi(2009)]{com}
Devdatt~P Dubhashi and Alessandro Panconesi.
\newblock \emph{Concentration of measure for the analysis of randomized
  algorithms}.
\newblock Cambridge University Press, 2009.

\bibitem[Dwork et~al.(2006)Dwork, McSherry, Nissim, and Smith]{DMNS06}
Cynthia Dwork, Frank McSherry, Kobbi Nissim, and Adam Smith.
\newblock Calibrating noise to sensitivity in private data analysis.
\newblock In \emph{Proceedings of the Third Conference on Theory of
  Cryptography}, TCC'06, pages 265--284, Berlin, Heidelberg, 2006.
  Springer-Verlag.
\newblock ISBN 3-540-32731-2, 978-3-540-32731-8.
\newblock \doi{10.1007/11681878_14}.
\newblock URL \url{http://dx.doi.org/10.1007/11681878_14}.

\bibitem[Dwork et~al.(2010)Dwork, Naor, Pitassi, and Rothblum]{DworkBin}
Cynthia Dwork, Moni Naor, Toniann Pitassi, and Guy~N. Rothblum.
\newblock Differential privacy under continual observation.
\newblock In \emph{Proceedings of the Forty-second ACM Symposium on Theory of
  Computing}, STOC '10, pages 715--724, New York, NY, USA, 2010. ACM.
\newblock ISBN 978-1-4503-0050-6.
\newblock \doi{10.1145/1806689.1806787}.
\newblock URL \url{http://doi.acm.org/10.1145/1806689.1806787}.

\bibitem[Dwork et~al.(2015{\natexlab{a}})Dwork, Feldman, Hardt, Pitassi,
  Reingold, and Roth]{DFHPRR15science}
Cynthia Dwork, Vitaly Feldman, Moritz Hardt, Toniann Pitassi, Omer Reingold,
  and Aaron Roth.
\newblock The reusable holdout: Preserving validity in adaptive data analysis.
\newblock \emph{Science}, 349\penalty0 (6248):\penalty0 636--638,
  2015{\natexlab{a}}.

\bibitem[Dwork et~al.(2015{\natexlab{b}})Dwork, Feldman, Hardt, Pitassi,
  Reingold, and Roth]{maxinformation1}
Cynthia Dwork, Vitaly Feldman, Moritz Hardt, Toniann Pitassi, Omer Reingold,
  and Aaron Roth.
\newblock Generalization in adaptive data analysis and holdout reuse.
\newblock In \emph{Proceedings of the 28th International Conference on Neural
  Information Processing Systems - Volume 2}, NIPS'15, pages 2350--2358,
  Cambridge, MA, USA, 2015{\natexlab{b}}. MIT Press.
\newblock URL \url{http://dl.acm.org/citation.cfm?id=2969442.2969502}.

\bibitem[Dwork et~al.(2015{\natexlab{c}})Dwork, Feldman, Hardt, Pitassi,
  Reingold, and Roth]{preserve}
Cynthia Dwork, Vitaly Feldman, Moritz Hardt, Toniann Pitassi, Omer Reingold,
  and Aaron~Leon Roth.
\newblock Preserving statistical validity in adaptive data analysis.
\newblock In \emph{Proceedings of the Forty-seventh Annual ACM Symposium on
  Theory of Computing}, STOC '15, pages 117--126, New York, NY, USA,
  2015{\natexlab{c}}. ACM.
\newblock ISBN 978-1-4503-3536-2.
\newblock \doi{10.1145/2746539.2746580}.
\newblock URL \url{http://doi.acm.org/10.1145/2746539.2746580}.

\bibitem[Feldman and Steinke(2017{\natexlab{a}})]{FS17}
Vitaly Feldman and Thomas Steinke.
\newblock Generalization for adaptively-chosen estimators via stable median.
\newblock In \emph{Conference on Learning Theory}, pages 728--757,
  2017{\natexlab{a}}.

\bibitem[Feldman and Steinke(2017{\natexlab{b}})]{FS17b}
Vitaly Feldman and Thomas Steinke.
\newblock Calibrating noise to variance in adaptive data analysis.
\newblock \emph{arXiv preprint arXiv:1712.07196}, 2017{\natexlab{b}}.

\bibitem[Hardt and Blum(2015)]{ladder}
Moritz Hardt and Avrim Blum.
\newblock The ladder: a reliable leaderboard for machine learning competitions.
\newblock In \emph{Proceedings of the 32nd International Conference on
  International Conference on Machine Learning-Volume 37}, pages 1006--1014.
  JMLR. org, 2015.

\bibitem[Hardt and Ullman(2014)]{HU14}
Moritz Hardt and Jonathan Ullman.
\newblock Preventing false discovery in interactive data analysis is hard.
\newblock In \emph{Foundations of Computer Science (FOCS), 2014 IEEE 55th
  Annual Symposium on}, pages 454--463. IEEE, 2014.

\bibitem[Joseph et~al.(2018)Joseph, Kearns, Morgenstern, Neel, and Roth]{neel}
Matthew Joseph, Michael~J. Kearns, Jamie Morgenstern, Seth Neel, and Aaron
  Roth.
\newblock Fair algorithms for infinite and contextual bandits.
\newblock AIES'18, 2018.
\newblock URL \url{http://arxiv.org/abs/1610.09559}.

\bibitem[Lai and Robbins(1985)]{LaiRob}
T.L Lai and Herbert Robbins.
\newblock Asymptotically efficient adaptive allocation rules.
\newblock \emph{Adv. Appl. Math.}, 6\penalty0 (1):\penalty0 4--22, March 1985.
\newblock ISSN 0196-8858.
\newblock \doi{10.1016/0196-8858(85)90002-8}.
\newblock URL \url{http://dx.doi.org/10.1016/0196-8858(85)90002-8}.

\bibitem[Li et~al.(2010)Li, Chu, Langford, and Schapire]{LI10}
Lihong Li, Wei Chu, John Langford, and Robert~E. Schapire.
\newblock A contextual-bandit approach to personalized news article
  recommendation.
\newblock In \emph{Proceedings of the 19th International Conference on World
  Wide Web}, WWW '10, pages 661--670, New York, NY, USA, 2010. ACM.
\newblock ISBN 978-1-60558-799-8.
\newblock \doi{10.1145/1772690.1772758}.
\newblock URL \url{http://doi.acm.org/10.1145/1772690.1772758}.

\bibitem[Mishra and Thakurta(2014)]{Mishra2}
Nikita Mishra and Abhradeep Thakurta.
\newblock Private stochastic multi-arm bandits: From theory to practice.
\newblock In \emph{ICML Workshop on Learning, Security, and Privacy}, 2014.

\bibitem[Mishra and Thakurta(2015)]{Mishra}
Nikita Mishra and Abhradeep Thakurta.
\newblock (nearly) optimal differentially private stochastic multi-arm bandits.
\newblock In \emph{Proceedings of the Thirty-First Conference on Uncertainty in
  Artificial Intelligence}, UAI'15, pages 592--601, Arlington, Virginia, United
  States, 2015. AUAI Press.
\newblock ISBN 978-0-9966431-0-8.
\newblock URL \url{http://dl.acm.org/citation.cfm?id=3020847.3020909}.

\bibitem[{Nie} et~al.(2017){Nie}, {Tian}, {Taylor}, and {Zou}]{gather}
X.~{Nie}, X.~{Tian}, J.~{Taylor}, and J.~{Zou}.
\newblock {Why adaptively collected data have negative bias and how to correct
  for it}.
\newblock \emph{ArXiv e-prints}, August 2017.

\bibitem[Rogers et~al.(2016)Rogers, Roth, Smith, and Thakkar]{maxinformation2}
Ryan~M. Rogers, Aaron Roth, Adam~D. Smith, and Om~Thakkar.
\newblock Max-information, differential privacy, and post-selection hypothesis
  testing.
\newblock In \emph{{IEEE} 57th Annual Symposium on Foundations of Computer
  Science, {FOCS} 2016, 9-11 October 2016, Hyatt Regency, New Brunswick, New
  Jersey, {USA}}, pages 487--494, 2016.
\newblock \doi{10.1109/FOCS.2016.59}.
\newblock URL \url{https://doi.org/10.1109/FOCS.2016.59}.

\bibitem[Russo and Zou(2016)]{RZ16}
Daniel Russo and James Zou.
\newblock Controlling bias in adaptive data analysis using information theory.
\newblock In \emph{Artificial Intelligence and Statistics}, pages 1232--1240,
  2016.

\bibitem[Steinke and Ullman(2015)]{SU15}
Thomas Steinke and Jonathan Ullman.
\newblock Interactive fingerprinting codes and the hardness of preventing false
  discovery.
\newblock In \emph{Conference on Learning Theory}, pages 1588--1628, 2015.

\bibitem[Tossou and Dimitrakakis(2016)]{Tossou16}
Aristide C.~Y. Tossou and Christos Dimitrakakis.
\newblock Algorithms for differentially private multi-armed bandits.
\newblock In \emph{Proceedings of the Thirtieth AAAI Conference on Artificial
  Intelligence}, AAAI'16, pages 2087--2093. AAAI Press, 2016.
\newblock URL \url{http://dl.acm.org/citation.cfm?id=3016100.3016190}.

\bibitem[Tossou and Dimitrakakis(2017)]{Tossou17}
Aristide C.~Y. Tossou and Christos Dimitrakakis.
\newblock Achieving privacy in the adversarial multi-armed bandit.
\newblock \emph{CoRR}, abs/1701.04222, 2017.
\newblock URL \url{http://arxiv.org/abs/1701.04222}.

\bibitem[Villar et~al.(2015)Villar, Bowden, and Wason]{Vil15}
Sof{\'\i}a~S Villar, Jack Bowden, and James Wason.
\newblock Multi-armed bandit models for the optimal design of clinical trials:
  benefits and challenges.
\newblock \emph{Statistical science: a review journal of the Institute of
  Mathematical Statistics}, 30\penalty0 (2):\penalty0 199, 2015.

\bibitem[Wang et~al.(2016)Wang, Lei, and Fienberg]{WLF16}
Yu-Xiang Wang, Jing Lei, and Stephen~E Fienberg.
\newblock A minimax theory for adaptive data analysis.
\newblock \emph{arXiv preprint arXiv:1602.04287}, 2016.

\end{thebibliography}

\appendix

\label{appendix}
%
%
%

\section{Differential Privacy Basics}
\label{privbasics}

We recall the standard definition of differential privacy, which can be defined over any neighboring relationship on data sets $D, D' \in \cX^*$. The standard relation says that $D, D'$ are neighbors (written as $D \sim D'$) if they differ in a single element.

\begin{definition}[Differential Privacy \cite{DMNS06}]
Fix $\eps \geq 0$. A randomized algorithm $A:\cX^*\rightarrow \mathcal{O}$ is $(\eps,\delta)$-differentially private if for every pair of neighboring data sets $D \sim D' \in \cX^*$, and for every event $S \subseteq \mathcal{O}$:
$$\Pr{A(D) \in S} \leq \exp(\eps)\Pr{A(D') \in S}+\delta.$$
\end{definition}
Differentially private computations enjoy two nice properties:
\begin{lemma}[Post Processing \cite{DMNS06}]
\label{post}
Let $A:\cX^*\rightarrow \mathcal{O}$ be any $(\eps,\delta)$-differentially private algorithm, and let $f:\mathcal{O}\rightarrow \mathcal{O'}$ be any (possibly randomized) algorithm. Then the algorithm $f \circ A: \cX^*\rightarrow \mathcal{O}'$ is also $(\eps,\delta)$-differentially private.
\end{lemma}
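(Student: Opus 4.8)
The plan is to prove the claim first for deterministic post-processing functions $f$, and then bootstrap to the randomized case by averaging over the internal randomness of $f$. Differential privacy is a statement about the distribution of $A$'s output on neighboring inputs, so the whole point is that $f$, having no access to the dataset, cannot create any new dependence on $D$ beyond what already passes through $A$.

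First I would handle the deterministic case. Fix any pair of neighboring datasets $D \sim D'$ and any event $S \subseteq \mathcal{O}'$. Since $f$ is deterministic, the event $\{f(A(D)) \in S\}$ is exactly the event $\{A(D) \in T\}$, where $T = f^{-1}(S) = \{o \in \mathcal{O} : f(o) \in S\}$ is the preimage of $S$ under $f$. Because $T \subseteq \mathcal{O}$ is itself a valid event in the output space of $A$, I can apply the differential privacy guarantee of $A$ directly to $T$:
\[
\Pr{f(A(D)) \in S} = \Pr{A(D) \in T} \leq e^{\eps}\Pr{A(D') \in T} + \delta = e^{\eps}\Pr{f(A(D')) \in S} + \delta.
\]
This establishes the lemma whenever $f$ is deterministic.

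Next I would extend to a randomized $f$. The key observation is that a randomized algorithm $f$ can be written as a distribution over deterministic maps: let $r$ denote the internal random coins of $f$, drawn from some distribution independent of the randomness of $A$, and write $f_r$ for the deterministic function obtained by fixing the coins to $r$. Then for each fixed $r$ the deterministic bound above applies to $f_r$, and conditioning on $r$ before taking the expectation over it gives
\[
\Pr{f(A(D)) \in S} = \bE_r\!\left[\Pr{f_r(A(D)) \in S}\right] \leq \bE_r\!\left[e^{\eps}\Pr{f_r(A(D')) \in S} + \delta\right] = e^{\eps}\Pr{f(A(D')) \in S} + \delta,
\]
where the inequality is the per-$r$ deterministic guarantee and the final equality uses linearity of expectation together with the independence of $r$ from $A$.

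The main obstacle --- really the only subtlety --- is the randomized case: one must be careful to decouple the coins of $f$ from the coins of $A$, so that the deterministic bound can be applied pointwise in $r$, and to check that both the multiplicative factor $e^{\eps}$ and the additive slack $\delta$ survive the averaging (the former because it is a constant pulled out of the expectation, the latter because $\bE_r[\delta] = \delta$). A secondary measure-theoretic point, which I would treat as routine, is that $T = f^{-1}(S)$ is measurable whenever $S$ is, so that the application of the privacy guarantee of $A$ to the event $T$ is justified.
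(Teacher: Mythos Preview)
Your proof is correct and is precisely the standard argument for post-processing: reduce to deterministic $f$ via the preimage $T = f^{-1}(S)$, then average over the internal coins of a randomized $f$. The paper does not actually supply its own proof of this lemma --- it is stated with a citation to \cite{DMNS06} as a known fact --- so there is nothing to compare against, but what you have written is exactly the classical proof and would be accepted without issue.
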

Post-processing implies that, for example, every \emph{decision} process based on the output of a differentially private algorithm is also differentially private.

\begin{theorem}[Composition \cite{DMNS06}]\label{composition}
Let $A_1:\cX^*\rightarrow \mathcal{O}$, $A_2:\cX^*\rightarrow \mathcal{O}'$ be algorithms that are $(\eps_1,\delta_1)$- and $(\eps_2,\delta_2)$-differentially private, respectively. Then the algorithm $A:\cX^*\rightarrow \mathcal{O}\times \mathcal{O'}$ defined as $A(x) = (A_1(x), A_2(x))$ is $(\eps_1+\eps_2),(\delta_1+\delta_2)$-differentially private.
\end{theorem}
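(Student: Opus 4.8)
The plan is to verify the defining inequality of $(\eps_1+\eps_2,\,\delta_1+\delta_2)$-differential privacy directly. Fix neighbors $D \sim D'$ and an event $S \subseteq \mathcal{O}\times\mathcal{O}'$; I would show $\Pr{A(D)\in S} \le e^{\eps_1+\eps_2}\Pr{A(D')\in S} + \delta_1 + \delta_2$, after which the reverse inequality follows by swapping the roles of $D$ and $D'$. Since $A_1$ and $A_2$ draw \emph{independent} internal randomness, the joint output law of $A(D)=(A_1(D),A_2(D))$ is the product $P_1\times P_2$ of the two marginals, and likewise $A(D')$ has law $Q_1\times Q_2$. This factorization is the structural fact the whole argument rests on.

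As a warm-up for the pure case $\delta_1=\delta_2=0$, I would argue pointwise on densities: writing $p_i,q_i$ for the densities of $P_i,Q_i$, pure privacy of each $A_i$ gives $p_i(o_i)\le e^{\eps_i}q_i(o_i)$ for every outcome, so by independence the joint ratio satisfies $\frac{p_1(o_1)p_2(o_2)}{q_1(o_1)q_2(o_2)}\le e^{\eps_1+\eps_2}$ pointwise, and integrating over $S$ gives the claim. This breaks down once $\delta>0$, because the pointwise ratio bound fails on a small-probability region.

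The real obstacle is obtaining the \emph{clean} additive slack $\delta_1+\delta_2$ rather than a lossy $\delta_1+e^{\eps_1}\delta_2$, which is what a naive ``peel off one coordinate at a time'' argument yields (conditioning on $o_2$, applying $A_1$'s guarantee inside, then applying $A_2$'s guarantee to the resulting bounded function via Lemma~\ref{expectation}, leaves a stray $e^{\eps_1}$ multiplying $\delta_2$). To avoid this I would pass to the hockey-stick formulation of privacy. Recall that $(\eps,\delta)$-DP of a mechanism with laws $P,Q$ is equivalent to $D_\eps(P\|Q):=\sup_S\big(P(S)-e^\eps Q(S)\big)\le\delta$; the supremum is attained on the set where the privacy loss $Z=\log(dP/dQ)$ exceeds $\eps$, yielding the closed form $D_\eps(P\|Q)=\E{(1-e^{\eps-Z})_+}$ with $Z$ distributed according to $P$. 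Under the product structure the composed privacy loss is the independent sum $Z=Z_1+Z_2$ with the correct marginals.

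The crux is then a single pointwise inequality for $\phi(x):=(1-e^{x})_+$, namely $\phi(s+t)\le\phi(s)+\phi(t)$ for all real $s,t$, which I would verify by cases (it is trivial unless both arguments are negative, and the both-negative case reduces to $(1-e^s)(1-e^t)\ge 0$). Applying it with $s=\eps_1-Z_1$ and $t=\eps_2-Z_2$ and taking expectations gives $D_{\eps_1+\eps_2}(P\|Q)\le D_{\eps_1}(P_1\|Q_1)+D_{\eps_2}(P_2\|Q_2)\le \delta_1+\delta_2$, which is exactly the desired mixed inequality for every $S$. Running the same argument with $D$ and $D'$ exchanged controls $D_{\eps_1+\eps_2}(Q\|P)$, and the two bounds together say precisely that $A$ is $(\eps_1+\eps_2,\delta_1+\delta_2)$-differentially private. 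I expect the pointwise lemma and the reduction to $D_\eps$ to be the only non-mechanical steps; everything else is bookkeeping with the product structure.
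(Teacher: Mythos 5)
The paper itself does not prove Theorem~\ref{composition}: it is imported from \cite{DMNS06} as a standard tool, so there is no in-paper argument to compare against, and your proof must stand on its own. It does. The reduction to the hockey-stick divergence $D_\eps(P\|Q)=\sup_S\left(P(S)-e^\eps Q(S)\right)$ is the right move: as you correctly diagnose, the naive peel-one-coordinate argument (condition on one output, apply each mechanism's guarantee in turn, using Lemma~\ref{expectation} for the bounded inner function) only yields $\Pr{A(D)\in S}\le e^{\eps_1+\eps_2}\Pr{A(D')\in S}+e^{\eps_2}\delta_1+\delta_2$ or its symmetric variant, and the clean additive slack $\delta_1+\delta_2$ genuinely requires an argument like yours. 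Your closed form $D_\eps(P\|Q)=\Ex{P}{(1-e^{\eps-Z})_{+}}$ is correct (the supremum is attained at $S=\{Z>\eps\}$), the product structure of the joint law gives $Z=Z_1+Z_2$ with independent coordinates having the right marginals, and the case analysis for the subadditivity of $\phi(x)=(1-e^{x})_{+}$ checks out: the only nontrivial case is $s,t<0$, where $\phi(s+t)\le\phi(s)+\phi(t)$ rearranges to $(1-e^{s})(1-e^{t})\ge 0$, which holds. Taking expectations and running the argument for both ordered pairs $(D,D')$ and $(D',D)$ gives both directions of the definition. One technical point to patch in a full writeup: when $\delta_i>0$ the Radon--Nikodym derivative $dP_i/dQ_i$ need not exist, since $P_i$ may put mass where $Q_i$ vanishes. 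The standard fix is to take densities with respect to a common dominating measure (e.g.\ $P_i+Q_i$) and set $Z_i=+\infty$ on the singular part; then $\eps_i-Z_i=-\infty$ and $\phi(-\infty)=1$, and your pointwise inequality persists on $[-\infty,\infty)$ because $\phi\le 1$, so nothing else changes. Finally, your pure-case warm-up is subsumed by the general argument (with $\delta_1=\delta_2=0$ the hockey-stick bound gives $D_{\eps_1+\eps_2}(P\|Q)=0$ directly), so you could omit it, though it is harmless as intuition.
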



%

\begin{definition}
Two random variables $X, Y$ defined over the same domain $R$ are $(\epsilon,\delta)$-close, written $X \approx_{\epsilon, \delta} Y$ , if for all $S \subseteq R$:
$$\Pr{X \in S} \leq e^\epsilon\Pr{Y \in S} + \delta$$
\end{definition}

Note that if $A$ is an $(\epsilon,\delta)$-differentially private algorithm, and $D, D'$ are neighboring datasets, then $A(D) \approx_{\epsilon,\delta} A(D')$. We make use of a simple lemma:

\begin{lemma}[Folklore, but see e.g. \cite{preserve}]\label{expectation}
Let $X, Y$ be distributions such that $X \approx_{\epsilon, \delta} Y$ and let $f: \mathcal{Y} \to [0,1]$ be a real-valued function on the outcome space. Then $\E{f(X)}\geq \exp(-\epsilon)\E{f(Y)} + \delta$
\end{lemma}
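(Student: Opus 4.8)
The plan is to reduce the statement about expectations to the defining family of probability inequalities, using the standard ``layer-cake'' (tail-integral) representation of the expectation of a $[0,1]$-valued function. Since $f:\mathcal{Y}\to[0,1]$, for any distribution $Z$ over $\mathcal{Y}$ we may write $\E{f(Z)} = \int_0^1 \Pr{f(Z) \ge \tau}\, d\tau$. The key observation is that each superlevel set $S_\tau = \{y : f(y) \ge \tau\}$ is itself an event in the outcome space, so the closeness hypothesis relating $X$ and $Y$ applies to it directly, turning a single per-threshold probability comparison into the full expectation comparison after integrating $\tau$ over $[0,1]$.

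Concretely, first I would record the tail-integral identity for both $X$ and $Y$. Second, for each fixed $\tau \in [0,1]$ I would instantiate the $(\epsilon,\delta)$-closeness relation on the event $S_\tau$, obtaining the pointwise inequality $\Pr{f(X) \ge \tau} \le e^{\epsilon}\Pr{f(Y) \ge \tau} + \delta$ (and, in the orientation needed below, the companion inequality with $X$ and $Y$ swapped). Third, I would integrate the relevant pointwise inequality over $\tau \in [0,1]$, using linearity of the integral together with $\int_0^1 \delta\, d\tau = \delta$ and $\int_0^1 e^{\epsilon}\Pr{f(Y) \ge \tau}\, d\tau = e^{\epsilon}\E{f(Y)}$, and finally rearrange to isolate $\E{f(X)}$, pulling the multiplicative factor $e^{-\epsilon}$ out front. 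This produces the bound comparing $\E{f(X)}$ to $e^{-\epsilon}\E{f(Y)}$ up to the single additive term $\delta$ asserted in Lemma~\ref{expectation}.

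The only real subtlety — and the step I expect to be the main obstacle — is the direction in which the closeness relation is applied, which fixes both the multiplicative factor and the sign of the additive term $\delta$. Applying the relation ``as stated'' bounds $X$-probabilities by $Y$-probabilities and yields the \emph{upper} bound $\E{f(X)} \le e^{\epsilon}\E{f(Y)} + \delta$; to \emph{lower}-bound $\E{f(X)}$ one must instead use the orientation in which $Y$-mass is controlled by $X$-mass, which is automatic in the differential-privacy instantiation of this lemma because the reward-neighboring relation is symmetric, so both $X \approx_{\epsilon,\delta} Y$ and $Y \approx_{\epsilon,\delta} X$ hold. Carrying the $e^{-\epsilon}$ through the rearrangement attaches the additive term to $e^{-\epsilon}\E{f(Y)}$, and tracking its sign carefully shows it enters as a \emph{subtraction}; this $-\delta$ orientation is precisely the form in which Lemma~\ref{expectation} is invoked inside the proof of Theorem~\ref{notransfer}. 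I would therefore prove the inequality in that orientation and flag the displayed $+\delta$ as a typographical sign slip, since a constant-distribution check ($X=Y$, $\epsilon=0$, $\delta>0$) shows the literal $+\delta$ version cannot hold.
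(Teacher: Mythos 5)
Your proof is correct, and there is in fact nothing in the paper to compare it against: Lemma~\ref{expectation} is stated without proof (cited as folklore), so your layer-cake argument --- writing $\E{f(Z)} = \int_0^1 \Pr{f(Z) \geq \tau}\, d\tau$, applying the closeness relation to each superlevel set $S_\tau$, and integrating over $\tau \in [0,1]$ --- supplies the standard missing argument. Both subtleties you flag are real and you resolve them correctly. On the sign: the displayed $+\delta$ cannot be right, as your check with $X=Y$, $\epsilon=0$, $\delta>0$ shows, and the form actually invoked in the body of the paper is the $-\delta$ orientation (the proof of Theorem~\ref{notransfer} uses $\Ep[\,\cdot \mid y_{it}] \geq e^{-\epsilon}\Ep[\,\cdot\,] - \delta$), so reading the lemma with a subtracted $\delta$ is the correct repair; note also that your derivation actually yields the slightly stronger $\E{f(X)} \geq e^{-\epsilon}\E{f(Y)} - e^{-\epsilon}\delta$, which you harmlessly relax. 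On the orientation: since the paper's definition of $X \approx_{\epsilon,\delta} Y$ is one-directional, that hypothesis alone only gives the upper-bound form $\E{f(X)} \leq e^{\epsilon}\E{f(Y)} + \delta$; the stated lower bound genuinely requires the reverse relation $Y \approx_{\epsilon,\delta} X$, and your observation that this is automatic in every application here (differential privacy is defined over a symmetric neighboring relation, and the paper itself appeals to ``the reverse direction'' in the proofs of Theorems~\ref{notransfer} and~\ref{contextbias}) is exactly the right justification. The one thing I would ask you to make explicit in a final write-up is that the reverse orientation is an \emph{added} hypothesis relative to the lemma's literal statement, not a consequence of it.
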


\section{Useful Concentration Inequalities}

\begin{lemma}[Hoeffding Bound (See e.g. \cite{com})]\label{hoeffding}
Let $X_1, \ldots X_n$ be independent random variables bounded by the interval $[0,1]: 0 \leq X_i \leq 1$. Then for $t > 0$,
$\Pr{|\bar{X}-\E{\bar{X}}| \geq t} \leq 2e^{-2nt^2}$\end{lemma}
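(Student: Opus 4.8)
\textbf{Proof proposal for the Hoeffding bound (Lemma~\ref{hoeffding}).}

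The plan is to follow the standard Chernoff-style argument via the moment generating function, specialized to bounded random variables using Hoeffding's lemma on individual summands. First I would center each variable by defining $Z_i = X_i - \E{X_i}$, so that $\E{Z_i} = 0$ and each $Z_i$ takes values in an interval of length $1$ (namely $[-\E{X_i}, 1-\E{X_i}]$). Writing $S = \sum_{i=1}^n Z_i = n(\bar{X} - \E{\bar{X}})$, the goal becomes bounding $\Pr{|S| \geq nt}$. I would handle the two tails separately: bound $\Pr{S \geq nt}$ and $\Pr{S \leq -nt}$, then add them to get the factor of $2$.

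For the upper tail, I would apply Markov's inequality to the exponentiated sum: for any $\lambda > 0$, $\Pr{S \geq nt} = \Pr{e^{\lambda S} \geq e^{\lambda nt}} \leq e^{-\lambda nt}\,\E{e^{\lambda S}}$. By independence of the $X_i$ (hence of the $Z_i$), the expectation factorizes as $\E{e^{\lambda S}} = \prod_{i=1}^n \E{e^{\lambda Z_i}}$. The key technical ingredient is Hoeffding's lemma for a single centered bounded variable: if a mean-zero random variable $Z$ lies almost surely in an interval of length $L$, then $\E{e^{\lambda Z}} \leq e^{\lambda^2 L^2 / 8}$. Here each $Z_i$ lies in an interval of length $L = 1$, so each factor is at most $e^{\lambda^2/8}$, giving $\E{e^{\lambda S}} \leq e^{n\lambda^2/8}$ and hence $\Pr{S \geq nt} \leq \exp\left(-\lambda nt + n\lambda^2/8\right)$.

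I would then optimize over $\lambda$: the exponent $-\lambda nt + n\lambda^2/8$ is minimized at $\lambda = 4t$, yielding $\Pr{S \geq nt} \leq e^{-2nt^2}$. The lower tail $\Pr{S \leq -nt}$ is symmetric --- apply the same argument to $-Z_i$, which is also mean-zero and confined to an interval of length $1$ --- giving the identical bound $e^{-2nt^2}$. Summing the two tails yields $\Pr{|\bar{X} - \E{\bar{X}}| \geq t} = \Pr{|S| \geq nt} \leq 2e^{-2nt^2}$, as claimed.

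The main obstacle is establishing Hoeffding's lemma itself, i.e.\ the sub-Gaussian MGF bound $\E{e^{\lambda Z}} \leq e^{\lambda^2 L^2/8}$ for a centered variable of range $L$. The cleanest route is to define $\psi(\lambda) = \log \E{e^{\lambda Z}}$, observe $\psi(0) = 0$ and $\psi'(0) = \E{Z} = 0$, and show $\psi''(\lambda) \leq L^2/4$ uniformly in $\lambda$ by recognizing $\psi''(\lambda)$ as the variance of $Z$ under the exponentially-tilted measure, which is bounded by $(L/2)^2$ since the variable lies in an interval of length $L$ (the variance of any distribution on an interval of length $L$ is at most $L^2/4$). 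A second-order Taylor expansion of $\psi$ then gives $\psi(\lambda) \leq \lambda^2 L^2 / 8$. Since this is a classical folklore result, in the write-up I would either cite it directly or relegate the tilted-measure variance computation to a one-line justification rather than grinding through it in full.
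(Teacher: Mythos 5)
Your proof is correct: the Chernoff--MGF argument with Hoeffding's lemma on each centered summand, optimized at $\lambda = 4t$, is the standard route to this bound, and your sketch of Hoeffding's lemma itself (via the tilted-measure variance bound $\psi''(\lambda) \leq L^2/4$, which holds since any distribution on an interval of length $L$ has variance at most $L^2/4$) is sound. Note that the paper offers no proof of this lemma at all --- it is quoted as a classical result with a citation --- so there is nothing to compare against; your argument is precisely the textbook proof the citation points to, and in the paper's context citing it, as you suggest doing for Hoeffding's lemma, would be entirely appropriate.
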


\section{A Private UCB algorithm}
For completeness, we reproduce a version of the private UCB algorithm of \cite{Mishra} which we use in our experiments. See algorithm \ref{privucb}.

\begin{algorithm}[h]
  \caption{\bf{PrivUCB: Private Upper Confidence Bound \cite{Mishra}}}
 \label{privucb}
 \begin{algorithmic}
 \STATE \textbf{Input}: $T, K, \{P_i\}, \delta, \epsilon = \sqrt{K/T}$
 \FOR{$t = 1, \ldots, T$:}
 	\FOR {$k = 1, \ldots K$:}
	\STATE Draw $\eta_{N_i^{t}}\sim \text{Hybrid}(N_i^{t}, \frac{\epsilon}{K})$
	\STATE Confidence relaxation: $\gamma_t =  \frac{K \log^2 T \log(KT\log T/\delta)}{\epsilon}$
		\STATE Let $\text{UCB}_i(t) =  \eta_{N_i^{t}} + \hat{X}_{i}^{t} + \sqrt{\frac{2\log(t/\delta)}{N_i^{t}}} + \frac{\gamma_t}{N_i^{t}}$
	\ENDFOR
\STATE Let $i_t = \text{argmax}_{i \in [K]}\text{UCB}_i(t)$
\STATE Draw $x_t \sim P_{i_t}$
\STATE Update: $N_i^t \to N_i^{t} + 1, \hat{X}_i^{t} \to \hat{X}_i^{t+1}$
\ENDFOR
 \end{algorithmic}
  \end{algorithm}

\section{Missing Proofs}
\begin{proof}[Proof of Theorem~\ref{contextbias}]
Fix any $x_{ik} \in C_{i,T}$. We write $\dag$ to denote the matrix inverse in the case it exists, or else the pseudo-inverse if not. We first expand ${\hat{\theta}_{i}}'x_{ik}$:
$${\hat{\theta}_i}'x_{ik} = x_{ik}'(\sum_{x_{i,\ell} \in C_{i,T}}x_{i,\ell} x_{i,\ell}')^{\dag}\sum_{x_{i,\ell} \in C_{i,T}}x_{i,\ell}'y_{i,\ell}= 
x_{ik}'(\sum_{t=1}^{T}\xit \xit'\mathbbm{1}_{\Lambda_T^{\A}(t) = i})^{\dag}(\sum_{t=1}^{T}\xit\yit\mathbbm{1}_{\Lambda_T^{\A}(t) = i}),
$$
where $\mathbbm{1}_{\Lambda_T^{\A}(t) = i}$ is the indicator that arm $i$ was pulled at round $t$. Then we take the conditional expectation of $\hat{\theta}'x_{ik}$, conditioned on $\Lambda_T^{\A}$. Note that once we condition, $(\sum_{t=1}^{T}\xit \xit'\mathbbm{1}_{\Lambda_T^{\A}(t) = i})^{\dag}$ is just a fixed matrix, and so linearity of expectation will allow us to propagate through to the outer term:
$$
\Ex{D_{i,T}}{\hat{\theta}_i'x_{ik}|\Lambda_T^{\A}} = 
x_{ik}'(\sum_{t=1}^{T}\xit \xit'\mathbbm{1}_{\Lambda_T^{\A}(t) = i})^{\dag}(\sum_{t=1}^{T}\xit\mathbbm{1}_{\Lambda_T^{\A}(t)}\Ex{D_{i,T}}{\yit|\Lambda_T^{\A}})
$$
Note that we condition on $\Lambda_T^{\A}$ which is an $\epsilon$-differentially private function of the rewards $\yit$, and that $\yit \in [0,1]$. Hence by Lemma $3$, just as in the proof of Theorem~\ref{notransfer}, we have that $\Ex{D_{i,T}}{\yit|\Lambda_T^{\A}} \leq e^{\epsilon}\Ex{D_{i,T}}{\yit} = e^{\epsilon}\xit\cdot \theta_i$. Substituting into the above gives:
$$
\Ex{D_{i,T}}{\hat{\theta}_i'x_{ik}|\Lambda_T^{\A}} \leq e^{\epsilon}x_{ik}'(\sum_{t=1}^{T}\xit \xit'\mathbbm{1}_{\Lambda_T^{\A}(t) = i})^{\dag}(\sum_{t=1}^{T}\xit\mathbbm{1}_{\Lambda_T^{\A}(t)}\xit'\theta_i) $$
$$=e^{\epsilon}x_{ik}'(\sum_{t=1}^{T}\xit \xit'\mathbbm{1}_{\Lambda_T^{\A}(t) = i})^{\dag}(\sum_{t=1}^{T}\xit \xit'\mathbbm{1}_{\Lambda_T^{\A}(t) = i})\theta_i = e^{\epsilon}x_{ik}'\theta_i,
$$
where the last equality follows immediately when $\sum_{t=1}^{T}\xit \xit'\mathbbm{1}_{\Lambda_T^{\A}(t) = i}$ is full-rank, and follows from properties of the pseudo-inverse even if it is not. But then we've shown that $\Ex{D_{i,T}}{\hat{\theta}_i'x_{ik}-\theta_{i}'x_{ik}|\Lambda_T^{\A}} \leq (e^{\epsilon}-1)\theta_i'x_{ik} \leq e^{\epsilon}-1$, since by assumption $|\theta_i'x_{ik}| \leq 1$. Since this expectation holds conditionally on $\Lambda_{T}^{\A}$, we can integrate out $\Lambda_T^{\A}$ to obtain:
$$
\Ex{D_{i,T}}{\hat{\theta}_i'x_{ik}-\theta_{i}'x_{ik}} \leq e^{\epsilon}-1
$$
 The lower bound $-\theta_{i}'x_{ik}-\Ex{D_{i,T}}{\hat{\theta}_i'x_{ik}} \geq 1-e^{-\epsilon}$ follows from the reverse direction of Lemma $3$. Since this holds for arbitrary $C_{i,T}$ and $x_{ik} \in C_{i,T}$ we are done.
\end{proof}

\begin{proof}[Proof of Theorem~\ref{regretlinear}] The reward-privacy claim follows immediately from the privacy of the hybrid mechanism \cite{continual} and the post-processing property of differential privacy  (Lemma~\ref{post}). Here we prove the regret bound. 
We first show that the confidence intervals given by $\hat{y}_{tk} \pm (\frac{1}{\lambda}s_{it} + w_{it})$ are valid $\forall i,t$ with probability $1-\delta$.
Then since we always play the action with the highest upper confidence bound, with high probability we can bound our regret at time $T$ by the sum of the widths of the confidence intervals of the chosen actions at each time step. \\

We know from \cite{ridge} that $\forall i,T, \Pr{\langle \theta_{it}, \xit \rangle \in [\langle\hat{\theta}_{it},\xit \rangle \pm w_{it}]} \geq 1- \frac{\delta}{2}$.
By construction,
\begin{equation}\label{conf}
|\inner{\hat{\theta}_{it}^{priv}}{\xit} - \inner{\hat{\theta}_{it}}{\xit}|  \leq |\xit'\hat{V}_{it}^{-1}\eta_{it}| \leq ||\xit||_{\hat{V}_{it}^{-1}} ||\eta_{it}||_{\hat{V}_{it}^{-1}}, \end{equation}
where the second inequality follows from applying the Cauchy-Schwarz inequality with respsect to the matrix inner product $\langle \cdot, \cdot \rangle_{\hat{V}_{it}^{-1}}$. We also have that $||\eta_{it}||_{\hat{V}_{it}^{-1}} \leq 1/\sqrt{\lambda}||\eta_{it}||_2$, and by the utility theorem for the Hybrid mechanism \cite{continual}, with probability $1-\delta/2,\; \forall i,t, \; ||\eta_{it}||_2 \leq s_{it} = O(\frac{1}{\epsilon}\log^{1.5}T\log(K/\delta))$.
Thus by triangle inequality and a union bound, with probability $1-\delta, \;\forall i,t$:
$$|\inner{\theta_{it}}{\xit}-\inner{\hat{\theta}_{it}^{priv}}{\xit}|  \leq  O(\frac{1}{\sqrt{\lambda}}\frac{1}{\epsilon}\log^{1.5}T\log(K/\delta)||\xit||_{\hat{V}_{it}^{-1}})+ w_{it},
$$
Let $R(T)$ denote the pseudo-regret at time $T$, and $R_i(T)$ denote the sum of the widths of the confidence intervals at arm $i$, over all times in which arm $i$ was pulled. Then with probability $1-\delta$:
$$ R(T) \leq \sum_{i} R_i(T) \leq  \sum_{i = 1}^{K} \bigg( \sum_{t=1}^{N_i^{T}}w_{i_t t}   + \frac{1}{\sqrt{\lambda}}\frac{1}{\epsilon}\log^{1.5}N_i^{T}\log(K/\delta)\bigg(K\sum_{i=1}^{N_i^{T}/K}||\xit||_{\hat{V}_{it}^{-1}}\bigg)\bigg)$$
The RHS is maximized at $N_i^{T} = \frac{T}{K}$ for all $i$, giving:
$$
R(T) \leq K\bigg( \sum_{t=1}^{T/K}w_{i_t t} + \frac{1}{\sqrt{\lambda}}\frac{1}{\epsilon}\log^{1.5}(T/K)\log(K/\delta) \bigg(K\sum_{i=1}^{T/K}||\xit||_{\hat{V}_{it}^{-1}}\bigg)\bigg)
$$

Reproducing the analysis of \cite{ridge}, made more explicit on page $13$ in the Appendix of \cite{neel} gives:
$$
 \sum_{i =1}^{T/K}w_{i_t t}  \leq \sqrt{2d \log(1 + \frac{T}{\lambda K d})} 
 (\sqrt{2dT/K \log(1/\delta + \frac{T}{K\lambda \delta})} + \sqrt{\frac{T}{K}\lambda})
$$
  The crux of their analysis is actually the bound $\sum_{t=1}^{n}||\xit||_{\hat{V}_{it}^{-1}} \leq 2d\log(1 + n/d\lambda) $, which holds for $\lambda \geq 1$. Letting $n = T/K$ bounds the second summation, giving that with probability $1-\delta$:

$$
R(T) = \tilde{O}(d\sqrt{TK} + \sqrt{TKd\lambda} \; + 
K\frac{1}{\sqrt{\lambda}}\frac{1}{\epsilon}\log^{1.5}(T/K)\log(K/\delta)\cdot2d\log(1 + T/Kd\lambda)), $$

 where $\tilde{O}$ hides logarithmic terms in $1/\lambda, 1/\delta, T, K, d$.
\end{proof}

\begin{proof}[Proof of Claim~\ref{duh}]
We first remark that by the principle of deferred randomness we can view Algorithm $3$ as first drawing the tableau $D \in ([0,1]^{K})^{T}$ and receiving $C \in ((\mathbb{R}^d)^K)^{T}$ up front, and then in step $4$ publishing $y_{i_t,t}$ rather than drawing a fresh $y_{i_t, t}$. Then, because for both Algorithm $2$ and Algorithm $3$ the tableau distributions are the same, it suffices to show that conditioning on $D$, the distributions induced on the action histories $\Lambda_t^{\A}$ are the same. For both algorithms, at round $t$, there is some distribution over the next arm pulled $i_t$. We can write the joint distribution over $\Lambda_{t+1}^{\A} = (i_1,\ldots,i_t)$ as:
$$
\Pr{i_1, \ldots i_t} = \prod_{k = 1}^{t}\Pr{i_k|i_{k-1}, \ldots i_1}
$$
For Algorithm $2$ $\Pr{i_k|i_{k-1}, \ldots i_1}$ is equal to $\Pr{f_k(\Lambda_k) = i_k}$. For Algorithm $3$ it is $\Pr{q_k(D) = i_k}$. But by definition $\Pr{q_k(D) = i_k} = \Pr{q_{\Lambda_{k}^{\A}}(D) = i_k} = \Pr{f_k(\Lambda_t^{\A}(D)) = i_k} = \Pr{f_k(\Lambda_k) = i_k}$, and so the joint distributions coincide.
\end{proof}

\end{document}